\documentclass{article}

\usepackage[preprint]{neurips_2026}

\usepackage[utf8]{inputenc} 
\usepackage[T1]{fontenc}    
\usepackage{hyperref}       
\usepackage{url}            
\usepackage{booktabs}       
\usepackage{amsfonts}       
\usepackage{nicefrac}       
\usepackage{microtype}      
\usepackage{xcolor}         
\usepackage{amsmath,amssymb,amsthm, bm}
\usepackage{graphicx}
\usepackage{booktabs}
\usepackage{caption}
\usepackage{wrapfig}
\usepackage{subcaption}
\usepackage{enumitem}
\usepackage{hyperref}
\usepackage{cleveref}
\usepackage{siunitx}
\usepackage{tikz}
\newtheorem{theorem}{Theorem}
\theoremstyle{definition}

\newtheorem{corollary}{Corollary}[theorem]

\newtheorem{proposition}{Proposition}

\usepackage{algpseudocode}
\usepackage{algorithm}
\usepackage{appendix}
\title{Spectral Geometry of Attention:\\ From Information Routing to Uncertainty}

\author{%
  Giulio Viganò\thanks{Work performed while at Ecole Polytechnique} \\
  University of Milano-Bicocca\\
  Milan, Italy \\
  \texttt{g.vigan36@campus.unimib.it} \\
  \And
  Simone Melzi \\
  University of Milano-Bicocca\\
  Milan, Italy \\
  \And
  Maks Ovsjanikov \\
  LIX, Ecole Polytechnique\\
  IP Paris \\
}
\begin{document}
\maketitle

\begin{abstract}
In this work, we study transformer attention through the lens of spectral geometry and operator theory. We view each attention head as a functional map between Hilbert spaces of functions on the token sequence and derive a \emph{Token Difference Operator}, whose spectral structure controls how token-space information is routed to the output. We show that standard Euclidean spectra are structurally biased by sinks, conflating mass concentration with genuine routing capacity. By recasting token space in the intrinsic probability geometry induced by attention, the token difference spectrum disentangles sink effects from routing capacity and provides a spectral description of the dimensionality of the head output. This yields a unified framework for analyzing attention maps, explaining sinks, routing collapse, and output dimensionality within a single operator-theoretic framework. 
In practice, by grounding attention heuristics in spectral geometry, we develop a novel attention-based uncertainty estimator that complements probability-based scores, with the largest gains on long-context inputs.
\end{abstract}
\section{Introduction}\label{sec:intro}

Attention maps are increasingly viewed as windows into the internal computation of transformer architectures \cite{vaswani2023attentionneed}. Prior works have shown that attention carries useful signals about how transformers process information, studying it as probability distributions~\cite{vaswani2023attentionneed}, graph operators~\cite{noël2025graphsignalprocessingframework,binkowski-etal-2025-hallucination}, transition matrices~\cite{erel2025attentionasdiscretetimemarkov}, or through row-wise statistics~\cite{vazhentsev2025uncertaintyawareattentionheadsefficient,zhai2023stabilizingtransformertrainingpreventing}. While these perspectives have revealed important insights, they primarily focus on pairwise token interactions, or leveraging models that do not describe the single pass effect of the attention, leaving comparatively unexplored the global structural properties of attention operators as a whole, especially in relation to the hidden representation dimensionality and the final output of the transformer. 

In this work, we approach attention from this global operator perspective, framing each attention map $A$ as a linear functional operator that transfers signals between geometric domains defined for the token sequence, and investigating its spectral properties. Inspired by functional maps~\cite{ovsjanikov20212fmap} and shape difference operators~\cite{rustamov2013shapediff,huang2019shapediff}, we derive a functional operator that characterizes the single-step action $V \mapsto AV$ induced by an attention head. Its spectrum reveals the multi-scale, global behavior of attention and is, as we demonstrate, strongly correlated with a range of phenomena from attention sinks to model uncertainty.

We introduce the \emph{Token Difference Operator} (TDO), a framework that departs from standard Euclidean analysis by assigning distinct geometric structures to the input and output domains of the attention map $A$. We demonstrate that the conventional practice of adopting a uniform Euclidean metric is structurally flawed; specifically, it is biased by the inherent row-stochasticity of attention and the presence of attention sinks. In these biased settings, the mass assigned to initial tokens can saturate the operator spectrum, making it appear that there is significant information flow when no actual token-dependent routing is occurring.
To rectify this, we induce an intrinsic probability geometry on the input domain, directly derived from the attention mechanism.
With this intrinsic metric, the trivial constant mode corresponds to a transported mean, while the remaining centered modes describe the token-dependent component of the head output. This yields a mean--routing decomposition of $AV$ that cleanly separates phenomena often conflated in the literature, including attention sinks~\cite{gu2025attentionsinkemergeslanguage,barbero2025llmsattendtoken,ruscio2025sinkinggeometricapproachattention}, dimensional collapse~\cite{queipodellano2026attentionsinkscompressionvalleys,sun2026spikesparsesinkanatomy}.

Importantly, these structural quantities are not purely conceptual constructions: the TDO can be computed from attention maps, and its spectrum provides a strong signal in a range of probing tasks. We show that intrinsic spectral diagnostics are less dominated by sink score than raw attention or Euclidean spectral analysis, while aligning more closely with the dimensionality of the head output. Building on this, we introduce \emph{Spectral Attention Uncertainty Quantification} (SAUQ), an unsupervised single-pass uncertainty estimator. Unlike probability-based estimators~\cite{zhang2024luqlongtextuncertaintyquantification,kuhn2023semanticuncertaintylinguisticinvariances,duan2024shiftingattentionrelevancepredictive}, which can fail when task-relevant distinctions are compressed before reaching the logits~\cite{barbero2025glasses,pasten2025continuityisolationleaddoubts,velickovic2026perplexitytellrightwrong}, SAUQ looks upstream at routing capacity and improves over closely related attention-based methods, especially on long-form summarization tasks. 

We formalize these insights through the following theoretical and practical contributions, also summarized in Figure \ref{fig:teaser}:
\begin{itemize}[leftmargin=*,noitemsep,topsep=0pt]
    \item \textbf{A global operator-theoretic view of attention.} We model attention maps as linear functional operators on token-space signals, shifting focus from token-level summaries to the spectral structure of the TDO.
    \item \textbf{A theoretical separation between sinks and dimensional collapse.} We prove that Euclidean spectra can be biased by sink tokens, and derive an intrinsic probability geometry in which the information dynamic is decomposed into a mean mode and a token-dependent routing subspace, proving that the nontrivial spectrum controls the dimensionality of the head output.
    \item \textbf{A link to uncertainty}: We empirically show that the spectral signal of high routing heads can be leveraged to predict uncertainty of LLMs.
\end{itemize}

\begin{figure}[!t]
    \centering
    \includegraphics[width=\linewidth,
        clip]{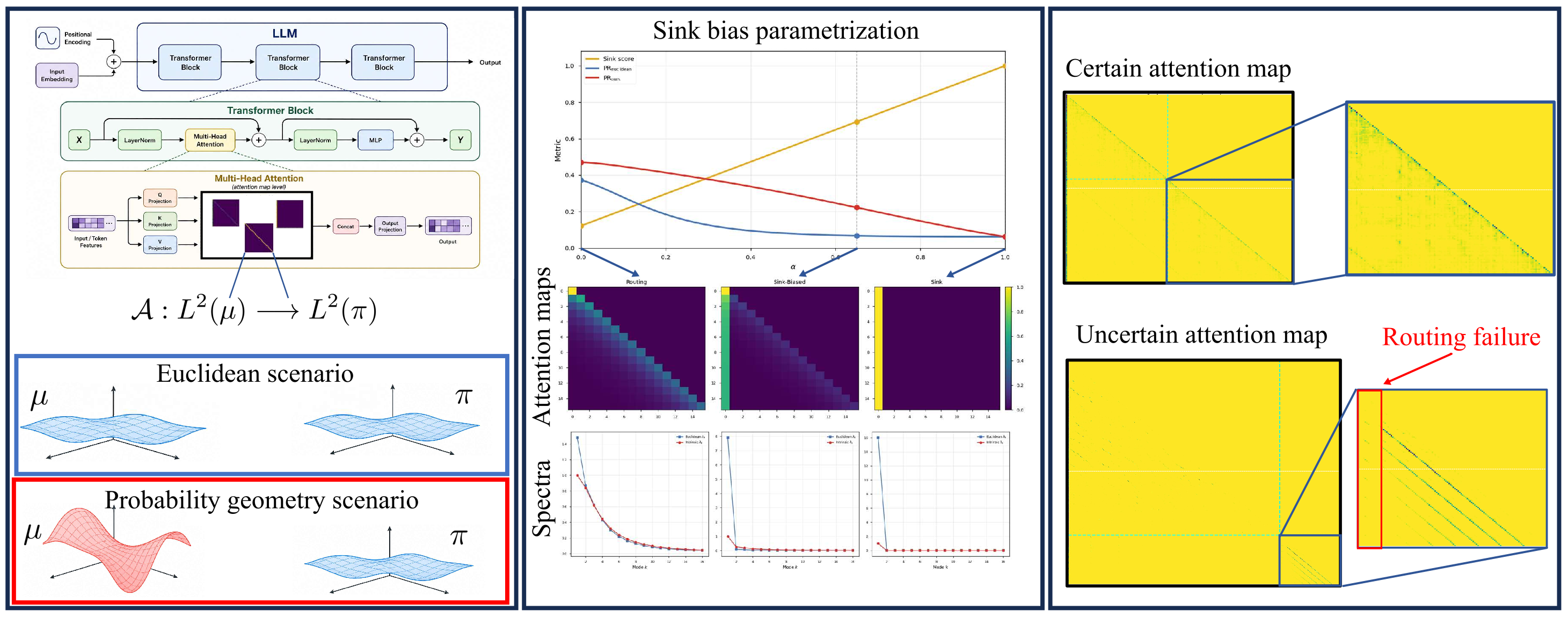}
    \caption{Teaser: (Left block) Our work studies transformers' attention as a functional map between Hilbert spaces equipped with an inner product. The properties of the operator change with the standard Euclidean measure (Blue) and with an intrinsic probability geometry measure (Red). (Center Block) Parametrizing row stochastic causal map interpolating ($\alpha$) between a copy map and a sink map, we show that the Euclidean spectrum of the matrix is biased towards the sink, while reweighting the input measure relaxes this bias. (Right Block) We leverage our insights to design an uncertainty quantifier, identifying uncertainty as a routing capability collapse in certain selected heads (red box in the uncertain model).}
    \label{fig:teaser}
\end{figure}

\section{Related Work}
\label{sec:related}

\paragraph{Attention Interpretability.}
A broad line of work studies attention as a mechanism for information processing and propagation in transformers. The classical view interprets attention weights as soft similarity scores between queries and keys~\cite{vaswani2023attentionneed}. More recent approaches model attention as a discrete-time Markov chain~\cite{erel2025attentionasdiscretetimemarkov}, as a graph signal operator with an associated Laplacian~\cite{noël2025graphsignalprocessingframework, binkowski-etal-2025-hallucination, tian2025spectralkdunifiedframeworkinterpreting}, or through spectral properties such as QK eigenspectrum concentration and localization~\cite{bao2024selfattentionnetworkslocalizeqkeigenspectrum}. These methods reveal important structure, but they typically analyze attention as a probability matrix, graph, or repeated operator. Our work instead focuses on the single-step routing map $V \mapsto AV$ and studies its spectrum with respect to an intrinsic geometry induced by attention itself.

This contribution is more relevant in the presence of attention sinks and representation collapse. Attention sinks have been studied through embedding spectra~\cite{cancedda-2024-spectral}, information propagation~\cite{barbero2025llmsattendtoken}, and geometric analyses~\cite{ruscio2025sinkinggeometricapproachattention}, and related phenomena have also been observed in vision transformers~\cite{darcet2024visiontransformersneedregisters}. In parallel, hidden-state collapse and compression effects have been linked to oversquashing and attention sinks~\cite{barbero2025glasses,queipodellano2026attentionsinkscompressionvalleys}, as well as to spiking behavior in feed-forward layers~\cite{sun2026spikesparsesinkanatomy, dong2023attentionneedpureattention}. We build on this line of work by showing that standard Euclidean spectral diagnostics can confuse sink concentration with routing capacity, and by introducing an intrinsic spectrum that separates these effects.

\paragraph{Uncertainty quantification in LLMs.}
Uncertainty estimation and hallucination detection for LLMs are often based on output probabilities, using quantities such as entropy, perplexity, likelihood, or sampling-based semantic consistency~\cite{zhang2024luqlongtextuncertaintyquantification,kuhn2023semanticuncertaintylinguisticinvariances,duan2024shiftingattentionrelevancepredictive}, and also spectral signals computed on the output logits \cite{lin2024generatingconfidenceuncertaintyquantification, vipulanandan2026semantic}. However, probability-based signals can be unreliable when task-relevant distinctions are compressed before reaching the logits~\cite{barbero2025glasses,pasten2025continuityisolationleaddoubts,velickovic2026perplexitytellrightwrong}. This motivates uncertainty signals derived from internal model dynamics.

For this reason, recent methods leverage hidden representations of LLMs to quantify uncertainty~\cite{kumaran2026llmscomputeverbalconfidence, chen2024insidellmsinternalstates}. In particular, some works focus on attention heads, looking at attention values~\cite{Sriramanan2024_attention_score}, previous-token relations~\cite{vazhentsev2025uncertaintyawareattentionheadsefficient}, Laplacian eigenvalues of attention graphs~\cite{binkowski-etal-2025-hallucination}, and topological divergence measures on attention graphs~\cite{bazarova2025hallucinationdetectionllmstopological}. 
While closely related to these attention-based methods, our approach replaces scalar or graph-based summaries with the intrinsic spectrum of the TDO.
Our solution grounds uncertainty estimation in the same geometry that controls routing capacity, yielding a single-pass estimator that captures spectral collapse in token-level information flow.
\section{Theoretical Framework}
\label{sec:theory}
We consider a decoder-only transformer with $L$ layers and $H$ attention heads per layer. The attention head $h$ at layer $l$ is computed as
\begin{equation}
    A_{l,h}
    =
    \mathrm{softmax}\!\left(Q_{l,h}K_{l,h}^\top/\sqrt{d} + C\right),
    \label{eq:attn}
\end{equation}
and acts on the value matrix as $A_{l,h}V_{l,h}$, where $C$ is the causal mask, $Q_{l,h},K_{l,h},V_{l,h}\in \mathbb{R}^{n,d}$.
For each layer $l$, the full multi-head attention output is obtained by concatenating the heads and projecting the output $
    \mathrm{MHA}_l(\hat{X}_{l-1})
    =
    \text{concat}_{h}\!\left(A_{l,h}V_{l,h}\right)W^O_l .$
The complete transformer block notation is provided in the Supplementary material.  Our study focuses on the matrix $A_{l,h}$ acting on the token space as a token-routing operator. 
We analyze one head at a time and omit the layer and head indices when no confusion arises.
For a fixed input, each attention map $A\in\mathbb R^{n\times n}$, as an operator acting on $V$, is row-stochastic, causal, and linear in the columns of its input $V$. These properties correspond to:
\begin{equation}
    A_{ij}\ge 0 \ \forall i, j;
    \qquad
    \sum_{j=1}^n A_{ij}=1 \ \forall i;
    \qquad
    A_{ij}=0 \ \forall j>i.
    \label{eq:attention-properties}
\end{equation}

\subsection{Attention as a functional mapping and Token Difference Operator}
\label{subsec:attention-functional-mapping}

Let $\mathcal T=\{1,\ldots,n\}$ denote the space of tokens described by their position. We define the weighted Hilbert space $L^2(\mu)=L^2(\mathcal T,\mu)$, as the space $\mathcal{T}$ equipped with the inner product:
\begin{equation}
    \langle f,g\rangle_\mu
    =
    \sum_{j=1}^n \mu_j f_j g_j
    =
    f^\top \Omega_\mu g,
    \qquad
    f,g\in L^2(\mathcal T)=\mathbb R^n,
    \label{eq:weighted-inner-product}
\end{equation}
where $\mu$ is a positive measure on $\mathcal T$, which can be encoded as a vector of length $n$ and $\Omega_\mu=\operatorname{diag}(\mu)$. Given another positive measure $\pi$ on $\mathcal T$, we can interpret the attention matrix $A$ as a linear map
\begin{equation}
    \mathcal{A}:L^2(\mu)\longrightarrow L^2(\pi),
    \qquad
    \mathcal{A}(f) = Af.
    \label{eq:T-def}
\end{equation}

Assuming $\mu_j>0$ for all $j$, the operator $\mathcal A$ admits an adjoint $\mathcal A^*:L^2(\pi)\to L^2(\mu)$ defined by $ \langle \mathcal A f,g\rangle_\pi
    =
    \langle f,\mathcal A^*g\rangle_\mu 
$, which can be computed with the following formula:
\begin{equation}
    \mathcal A^*
    =
    \Omega_\mu^{-1}A^\top \Omega_\pi .
    \label{eq:adjoint}
\end{equation}

Having established these two operators and inspired by spectral geometry \cite{huang2019shapediff}, we define the \emph{Token Difference Operator} (TDO) as 
\begin{equation}
    D_{\mu,\pi}: L^2(\mu)\to L^2(\mu) \qquad
    D_{\mu,\pi}
    := \mathcal{A}^* \mathcal{A}
    = \Omega_\mu^{-1} A^\top \Omega_\pi A,
    \label{eq:weighted-token-difference}
\end{equation}
This operator is self-adjoint and positive semidefinite in $L^2(\mu)$, and is strictly related to the Rayleigh quotient $\mathcal{R}_{\mu,\pi}(f) = \|\mathcal{A} f\|_\pi^2 / \|f\|_\mu^2$, which measures how much the attention operator $A$ stretches an input signal $f$ by acting on it and transferring it between the two geometries. The properties of the TDO strongly depend on the measures $\mu,\pi$, which we refer to as the geometry, and we will discuss this relation in the next sections. 

We note that all spaces considered here are finite-dimensional: $L^2(\mu)$ is $\mathbb R^n$ equipped with a weighted inner product, and every result below can be written in matrix form. We adopt the operator language because it makes the construction discretization-agnostic, as for functional maps~\cite{ovsjanikov20212fmap} and shape difference operators~\cite{rustamov2013shapediff}; concurrent work~\cite{xiao2026functionalattentionpairwiseaffinities} similarly treats attention as a map between function spaces.


\subsubsection{Euclidean scenario: structural sink bias}
\label{subsec:counting}

Taking both $\mu$ and $\pi$ as counting measures (i.e. $\mu_j = \pi_j = 1 \ \forall j$) gives $D_{\mu,\pi} = A^\top A$, the standard Euclidean Gram matrix from which the standard singular values $\{\sigma_i(A)\}_{i=1}^{n}$ are computed. In this geometry, row-stochastic attention is generally not a contraction.

\begin{proposition}[Euclidean Non-contractivity]
\label{prop:counting-noncontractive}
For $A$ row-stochastic, $\sigma_1(A) \ge 1$, with equality if and only if $A$ is doubly stochastic.
\end{proposition}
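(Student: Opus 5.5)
The plan is to obtain the lower bound by testing the Rayleigh quotient $\mathcal R_{\mathbf 1,\mathbf 1}(f)=\|Af\|_2^2/\|f\|_2^2$ on a single vector, and then to characterise equality through the first-order optimality condition for that quotient. The natural test vector is $\mathbf 1\in\mathbb R^n$. Row-stochasticity in \eqref{eq:attention-properties} gives $A\mathbf 1=\mathbf 1$. Hence
\[
\sigma_1(A)^2=\max_{f\neq 0}\frac{\|Af\|_2^2}{\|f\|_2^2}\ \ge\ \frac{\|A\mathbf 1\|_2^2}{\|\mathbf 1\|_2^2}=1 .
\]
Here $\sigma_1(A)^2$ is the largest eigenvalue of $D_{\mathbf 1,\mathbf 1}=A^\top A$, and the first equality is the variational (Courant--Fischer) characterisation. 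This proves $\sigma_1(A)\ge 1$.

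For the ``if'' direction, suppose $A$ is doubly stochastic. Its maximum absolute column sum $\|A\|_1$ and maximum absolute row sum $\|A\|_\infty$ both equal $1$, since all entries are nonnegative. The standard interpolation bound $\|A\|_2\le\sqrt{\|A\|_1\|A\|_\infty}$ then gives $\sigma_1(A)\le 1$. Combined with the lower bound, $\sigma_1(A)=1$. Alternatively, $A^\top A$ is itself nonnegative with unit row sums, so its spectral radius is $1$ by Perron--Frobenius or Gershgorin.

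For the ``only if'' direction, which is the step needing care, suppose $\sigma_1(A)=1$. Then $\mathbf 1$ attains the maximum of the Rayleigh quotient of the symmetric matrix $A^\top A$. Any maximiser of the Rayleigh quotient of a symmetric matrix is an eigenvector for the top eigenvalue; this follows by setting the gradient to zero, or by expanding $\mathbf 1$ in an orthonormal eigenbasis and noting that all weight must lie in the top eigenspace. Hence $A^\top A\mathbf 1=\mathbf 1$. Since $A\mathbf 1=\mathbf 1$, this reads $A^\top\mathbf 1=\mathbf 1$, so every column of $A$ sums to one, and $A$ is doubly stochastic.

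The only real subtlety is that eigenvector argument: one must use that attaining the maximum, not merely attaining the value $1$, forces $\mathbf 1$ into the top eigenspace. The eigenbasis expansion handles this cleanly. One may also remark that under the causal constraint, double stochasticity forces $A=I$. Indeed, column $n$ has the single entry $A_{nn}$, which must equal $1$; then row $n$ has no other mass, and induction downward gives $A=I$. The statement itself, however, does not require causality.
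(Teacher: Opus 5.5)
Your proof is correct and follows the paper's argument step for step: the Rayleigh-quotient test at $\mathbf 1$ gives the lower bound, the fact that a maximiser of a symmetric Rayleigh quotient is a top eigenvector gives $A^\top\mathbf 1=\mathbf 1$, and the interpolation bound $\|A\|_2\le\sqrt{\|A\|_1\|A\|_\infty}$ gives the converse. Your side remark, that a causal doubly stochastic $A$ must equal the identity, is also correct, though it does not appear in the paper.
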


The constraint $A\mathbf{1}=\mathbf{1}$ fixes the constant function as a
unit-gain direction for $A$, but in counting geometry the TDO acts on this mode
as $
    A^\top A\mathbf{1}=A^\top\mathbf{1}=c, c_j=\sum_i A_{ij}.$
Thus, the Euclidean Gram operator converts the constant mode into the
column-sum profile. Under causal masking $A_{11}=1$, hence
$c_1\ge 1$ and $\sum_j c_j=n$; early keys are also visible to more query
positions than late keys. The leading Euclidean spectrum is therefore sensitive
to accumulated sink mass. In the extreme causal sink
$A=\mathbf{1}e_1^\top$, this becomes
$A^\top A=n e_1e_1^\top$, so $\sigma_1(A)^2=n$ and all spectral energy lies in
the sink coordinate. The bias is not only a rank error, but a quadratic amplification of column-mass concentration.
The proof of Proposition~\ref{prop:counting-noncontractive} and a Frobenius budget refining this picture appear in Supplementary materials~\ref{app:counting}.

\subsubsection{Probability geometry scenario: intrinsic non-expansion}
\label{subsec:mixed}
The non-contractivity above reflects a mismatch between row-stochasticity and an unweighted geometry: the counting measure ignores the nonuniform column mass induced by causal attention. We resolve this by taking the output measure to be probabilistic and pulling it back to obtain the input measure:
\begin{equation}
    \pi = \frac{1}{n}\mathbf{1},
    \qquad
    \mu = A^\top \pi = \frac{1}{n}\sum_{i=1}^n (A_{i:})^{\top}.
    \label{eq:mixed-measures}
\end{equation}
Both are probability measures; $\mu$ is the average attention row over keys. This choice also restores the correct stationary geometry of the TDO. Since $\mu=A^\top\pi$, we have $D_{\mu,\pi}\mathbf 1=\mathbf 1,\mu^\top D_{\mu,\pi}=\mu^\top .$ Thus, the pulled-back column-mass profile is not amplified as an unweighted Euclidean direction; it becomes the invariant measure with respect to which
constants and centered fluctuations are defined.

\begin{proposition}[Intrinsic non-expansion]
\label{thm:sigma1}
With the measures \eqref{eq:mixed-measures}, $\mathcal{A}: L^2(\mu) \to L^2(\pi)$ satisfies $\|\mathcal{A} f\|_\pi \le \|f\|_\mu$ for all $f \in L^2(\mu)$, with operator norm $1$ and $\lambda_1(D_{\mu,\pi}) = 1$ attained by constant $\mathbf{1}$.
\end{proposition}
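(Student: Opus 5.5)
The plan is to prove the norm bound pointwise with Jensen's inequality (equivalently Cauchy--Schwarz) applied row by row, and then sum against $\pi$. Fix $f\in L^2(\mu)$ and a query index $i$. Since the row $A_{i:}$ is a probability vector by \eqref{eq:attention-properties}, convexity of $t\mapsto t^2$ gives
\begin{equation*}
(Af)_i^2=\Bigl(\sum_j A_{ij}f_j\Bigr)^2\le \sum_j A_{ij}f_j^2 .
\end{equation*}
Multiplying by $\pi_i=1/n$ and summing over $i$ yields
\begin{equation*}
\|\mathcal A f\|_\pi^2\le \sum_j\Bigl(\sum_i \pi_i A_{ij}\Bigr) f_j^2=\sum_j \mu_j f_j^2=\|f\|_\mu^2 ,
\end{equation*}
where the middle equality is exactly the pull-back definition $\mu=A^\top\pi$ from \eqref{eq:mixed-measures}. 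This gives $\|\mathcal A\|\le 1$.

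For attainment, take $f=\mathbf 1$. Row-stochasticity gives $A\mathbf 1=\mathbf 1$, and both measures are probability measures, so $\|\mathcal A\mathbf 1\|_\pi^2=\sum_i\pi_i=1=\sum_j\mu_j=\|\mathbf 1\|_\mu^2$. Hence the operator norm equals $1$.

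To pass to the spectrum of the TDO, I would use that $D_{\mu,\pi}=\mathcal A^*\mathcal A$ is self-adjoint and positive semidefinite on $L^2(\mu)$. Its largest eigenvalue is therefore the maximum of the Rayleigh quotient $\mathcal R_{\mu,\pi}(f)=\|\mathcal A f\|_\pi^2/\|f\|_\mu^2$, which is the squared operator norm, so $\lambda_1=1$. I would also confirm directly that $\mathbf 1$ is an eigenvector. Using \eqref{eq:adjoint},
\begin{equation*}
D_{\mu,\pi}\mathbf 1=\Omega_\mu^{-1}A^\top\Omega_\pi A\mathbf 1=\Omega_\mu^{-1}A^\top\pi=\Omega_\mu^{-1}\mu=\mathbf 1 ,
\end{equation*}
which matches the identity stated before the proposition.

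The main obstacle is the standing assumption $\mu_j>0$, which is needed for $L^2(\mu)$ to be a genuine inner-product space and for the adjoint formula to make sense. If some key receives zero attention from every query, then $\mu_j=0$. In that case column $j$ of $A$ vanishes, since $\mu_j=\frac1n\sum_i A_{ij}$ is a sum of nonnegative terms. So $\mathcal A$ ignores $f_j$ entirely and the inequality still holds. I would either assume strict positivity, which softmax guarantees on the causally visible entries because every key $j$ is visible to query $n$, or restrict to the support of $\mu$. The softmax observation makes $\mu_j>0$ automatic here, so this step is a remark rather than a real difficulty.
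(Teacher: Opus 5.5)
Your proof is correct and follows the paper's argument essentially step for step: row-wise Jensen averaged against the uniform $\pi$ gives $\|f\|_\mu^2$ via $\mu=A^\top\pi$, the bound is attained at the constant function, $\lambda_1(D_{\mu,\pi})$ is identified with the squared operator norm through the Rayleigh quotient, and $D_{\mu,\pi}\mathbf 1=\mathbf 1$ is checked directly. Your remark that softmax with the causal mask forces $\mu_j\ge A_{nj}/n>0$ is a small addition; the paper only assumes $\mu_j>0$.
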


\emph{Sketch of proof.} The bound follows from Jensen's inequality $(Af)_i^2 \leq \sum_j A_{ij} f_j^2$ averaged with uniform $\pi$; equality on $\mathbf{1}$ is trivial. Full proof in Supplementary materials~\ref{app:thm-sigma1}.

\begin{corollary}[Centered contraction]
\label{cor:centered-contraction}
Let $\mathbf{1}^{\perp_\mu} = \{f \in L^2(\mu) : \mu^\top f = 0\}$ and order
$1 = \lambda_1 \ge \lambda_2 \ge \cdots$. Then
\begin{equation}
    \sup_{f \in \mathbf{1}^{\perp_\mu} \setminus \{0\}}
    \frac{\|\mathcal{A} f\|_\pi}{\|f\|_\mu}
    = \sqrt{\lambda_2(D_{\mu,\pi})}
    \le 1.
    \label{eq:centered-contraction}
\end{equation}
\end{corollary}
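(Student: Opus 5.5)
The plan is to reduce the corollary to the Courant--Fischer min--max characterization of the self-adjoint operator $D_{\mu,\pi}$ on $L^2(\mu)$, and then to use Proposition~\ref{thm:sigma1} to identify the top eigenvector. First, I will assume $\mu_j>0$ for all $j$, as needed for the adjoint~\eqref{eq:adjoint}. Then $D_{\mu,\pi}=\mathcal A^*\mathcal A$ is self-adjoint and positive semidefinite with respect to $\langle\cdot,\cdot\rangle_\mu$. By the spectral theorem in the weighted inner product, there is a $\mu$-orthonormal eigenbasis $\phi_1,\dots,\phi_n$ with eigenvalues $\lambda_1\ge\lambda_2\ge\cdots\ge 0$. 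Moreover, the Rayleigh quotient satisfies $\mathcal R_{\mu,\pi}(f)=\langle D_{\mu,\pi}f,f\rangle_\mu/\|f\|_\mu^2$, since $\|\mathcal A f\|_\pi^2=\langle \mathcal A^*\mathcal A f,f\rangle_\mu$.

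Second, I will pin down the top eigenpair. Proposition~\ref{thm:sigma1} gives $\lambda_1=1$, with $\mathbf 1$ as an eigenvector; directly, $D_{\mu,\pi}\mathbf 1=\Omega_\mu^{-1}A^\top\pi\cdot n\cdot\frac1n=\Omega_\mu^{-1}\mu=\mathbf 1$, using $A\mathbf 1=\mathbf 1$ and $\mu=A^\top\pi$. So I can take $\phi_1=\mathbf 1$, which is already $\mu$-normalized since $\mu$ is a probability measure. Its $\mu$-orthogonal complement is $\{f:\langle f,\mathbf 1\rangle_\mu=\mu^\top f=0\}=\mathbf 1^{\perp_\mu}$. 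Because $D_{\mu,\pi}$ is self-adjoint, this complement is invariant and is spanned by $\phi_2,\dots,\phi_n$.

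Third, I will compute the supremum. For $f=\sum_{k\ge2}c_k\phi_k$, the quotient $\mathcal R_{\mu,\pi}(f)=\sum_{k\ge 2}\lambda_k c_k^2/\sum_{k\ge2}c_k^2$ is at most $\lambda_2$, with equality at $f=\phi_2$. Taking square roots gives the identity in~\eqref{eq:centered-contraction}, and the bound $\lambda_2\le\lambda_1=1$ gives $\le 1$.

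The main obstacle is the case where the eigenvalue $1$ is degenerate. There, "$\lambda_2$" must be read as the second eigenvalue counted with multiplicity, and the argument needs the freedom to choose $\phi_1=\mathbf 1$ within the top eigenspace. I would handle this by first fixing $\mathbf 1$ as the first basis vector and then orthonormalizing the rest of that eigenspace within $\mathbf 1^{\perp_\mu}$. With that choice the eigenvalue list is unchanged, so the statement holds, with $\lambda_2=1$ possible. A minor point is the case $n=1$, where the centered space is trivial and I would treat the statement as vacuous.
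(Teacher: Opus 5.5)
Your proof is correct and takes essentially the same route as the paper. Both arguments use the self-adjointness of $D_{\mu,\pi}$ in $L^2(\mu)$ together with $D_{\mu,\pi}\mathbf 1=\mathbf 1$ to make $\mathbf 1^{\perp_\mu}$ an invariant subspace, then apply the Rayleigh--Ritz principle there; your explicit eigenbasis expansion is just that principle written out. Your handling of a degenerate unit eigenvalue, by choosing $\phi_1=\mathbf 1$ inside the top eigenspace, makes precise the multiplicity remark that the paper states only informally.
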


The trivial unit-gain mode is the constant function. All non-trivial routing happens on its $\mu$-orthogonal complement, where the contraction factor is governed by $\lambda_2(D_{\mu,\pi})$. We report the complete proof in the Supplementary materials~\ref{app:cor-centered}.

The unit singular mode is still present, but it now has a fixed interpretation: it is a constant function. This mode carries no token-dependent information, since it assigns the same value to every position and is preserved by any row-stochastic attention map. Therefore, the relevant question is not whether
$A$ preserves constants, but how much variation around the attention-weighted mean can survive one attention step.

\subsection{From functions to representations}
\label{subsec:routed-representations}

The preceding analysis concerns scalar functions $f\in L^2(\mathcal T,\mu)$. The value matrix $V\in\mathbb R^{n\times d}$ can be
viewed columnwise as $d$ such functions, hence, the intrinsic contraction result
directly describes how an attention head routes hidden representations across
token positions. We decompose $V$ into its $\mu$-mean and its centered component:
\begin{equation}
    m := \mu^\top V \in \mathbb{R}^d,
    \qquad
    V_c := V - \mathbf{1} m,
    \qquad
    \mu^\top V_c = 0.
    \label{eq:value-decomposition}
\end{equation}
Here, $m$ is the attention-weighted mean value vector, while $V_c$ contains the
token-dependent deviations around this mean.

\begin{proposition}[Mean--routing decomposition]
\label{thm:mean-routing-decomposition}
Under the intrinsic measures \eqref{eq:mixed-measures}, the head output
decomposes as
\begin{equation}
    AV = \mathbf{1} m + A V_c,
    \label{eq:mean-routing}
\end{equation}
with $\langle \mathbf{1} m, A V_c \rangle_{\pi, F} = 0$ and the routed
component bounded by
\begin{equation}
    \|A V_c\|_{\pi, F}
    \le \sqrt{\lambda_2(D_{\mu,\pi})} \, \|V_c\|_{\mu, F}.
    \label{eq:routed-contraction}
\end{equation}
where we define $\|U\|_{\mu,F}^2 = \operatorname{tr}(U^\top \Omega_\mu U)$ as the weighted Frobenius norm.
\end{proposition}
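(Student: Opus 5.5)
The plan is to establish the three claims of Proposition~\ref{thm:mean-routing-decomposition} in order: the decomposition identity, the orthogonality, and the contraction bound. Throughout, $\mathbf{1}\in\mathbb R^n$ and matrices are treated columnwise. For $U,W\in\mathbb R^{n\times d}$ we use the weighted Frobenius inner product $\langle U,W\rangle_{\pi,F}=\operatorname{tr}(U^\top\Omega_\pi W)=\sum_{k=1}^d\langle U_{:k},W_{:k}\rangle_\pi$, and analogously for $\mu$.

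\textbf{Step 1 (identity).} Substitute $V=\mathbf{1}m+V_c$ from \eqref{eq:value-decomposition} and use linearity: $AV=A\mathbf{1}m+AV_c$. Row-stochasticity \eqref{eq:attention-properties} gives $A\mathbf{1}=\mathbf{1}$, which yields \eqref{eq:mean-routing}.

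\textbf{Step 2 (orthogonality).} Compute
\[
\langle\mathbf{1}m,AV_c\rangle_{\pi,F}=\operatorname{tr}(m^\top\mathbf{1}^\top\Omega_\pi AV_c)=\operatorname{tr}\bigl(m^\top(\pi^\top A)V_c\bigr).
\]
By the choice \eqref{eq:mixed-measures}, $\pi^\top A=(A^\top\pi)^\top=\mu^\top$. Hence the expression equals $\operatorname{tr}(m^\top\mu^\top V_c)$, which vanishes because $\mu^\top V_c=0$. This is exactly where the pull-back definition of $\mu$ enters: in the counting geometry $\mathbf{1}^\top A=c^\top\neq\mu^\top$ in general, and the orthogonality would fail.

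\textbf{Step 3 (bound).} Each column $(V_c)_{:k}$ lies in $\mathbf{1}^{\perp_\mu}$. Corollary~\ref{cor:centered-contraction} therefore gives
\[
\|A(V_c)_{:k}\|_\pi^2\le\lambda_2(D_{\mu,\pi})\,\|(V_c)_{:k}\|_\mu^2 .
\]
Summing over $k=1,\dots,d$ gives $\|AV_c\|_{\pi,F}^2\le\lambda_2\|V_c\|_{\mu,F}^2$, since both weighted Frobenius norms split as sums of columnwise squared norms. Taking square roots yields \eqref{eq:routed-contraction}.

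\textbf{Main obstacle.} The main point to check is that Corollary~\ref{cor:centered-contraction} applies to the weighted spaces as stated. This requires $\mu_j>0$ for every $j$; otherwise $L^2(\mu)$ is degenerate and the adjoint \eqref{eq:adjoint} is undefined. The condition holds here because column $1$ always has positive mass ($A_{11}=1$), but later columns could have zero mass. If needed, I would assume strict positivity of $\mu$, which is automatic for softmax attention since every visible entry is strictly positive, so each column $j$ receives mass $A_{jj}>0$. The remaining steps are direct matrix identities.
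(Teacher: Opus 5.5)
Your proof is correct and follows the paper's argument step for step: the split comes from $A\mathbf{1}=\mathbf{1}$, orthogonality from $\pi^\top A=\mu^\top$ together with $\mu^\top V_c=0$, and the bound from applying Corollary~\ref{cor:centered-contraction} to each column of $V_c$ and summing. Your remark on strict positivity of $\mu$ is a useful addition; it holds for softmax attention because $A_{jj}>0$ gives $\mu_j\ge A_{jj}/n>0$, whereas the paper only assumes $\mu_j>0$ when defining the adjoint.
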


\emph{Sketch of proof.} $V = \mathbf{1} m + V_c$ and $A\mathbf{1} = \mathbf{1}$
gives the split; orthogonality is trivial, and the norm bound applies
Corollary~\ref{cor:centered-contraction} columnwise. Full proof in
Supplementary materials~\ref{app:thm-mean-routing}.

The choice of intrinsic measures aligns the barycenters of the input and output
representations. Since $\pi^\top A=\mu^\top$, we have
\begin{equation}
    \pi^\top AV = \mu^\top V = m.
\end{equation}
Thus, attention maps the $\mu$-weighted input mean to the $\pi$-weighted output
mean. Thus, \(\mathbf{1}m\) captures the position-independent transported
mean, while \(AV_c\) captures token-dependent routing. The factor
\(\sqrt{\lambda_2(D_{\mu,\pi})}\) controls how much centered variation
survives one attention step, and therefore governs the routed
dimensionality, as shown in the following Corollary.
\begin{corollary}[Spectral entropy bound]
\label{cor:spectral-entropy-routed}
Let $Z = \Omega_\pi^{1/2} AV$, $Z_c = \Omega_\pi^{1/2} A V_c$, and
$\alpha = \|Z_c\|_F^2 / (\|m\|_2^2 + \|Z_c\|_F^2)$. Define the spectral
entropy $H(Y) = -\sum_k p_k \log p_k$ with
$p_k = \sigma_k(Y)^2 / \|Y\|_F^2$. Then, with $h_2$ the binary entropy,
\begin{equation}
    \alpha H(Z_c) \le H(Z) \le h_2(\alpha) + \alpha H(Z_c),
    \qquad
    \|Z_c\|_F \le \sqrt{\lambda_2(D_{\mu,\pi})} \|V_c\|_{\mu, F}.
    \label{eq:spectral-entropy-bound}
\end{equation}
\end{corollary}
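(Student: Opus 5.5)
The plan is to read the singular-value entropies in Corollary~\ref{cor:spectral-entropy-routed} as von Neumann entropies of normalized Gram matrices. The rank-one mean term and the routed term then enter as a two-component mixture of density matrices, and both inequalities become standard mixing bounds for von Neumann entropy. The key point is that $H(Y)$ depends only on the spectrum of $\rho_Y := Y^\top Y/\|Y\|_F^2$, which is a density matrix (PSD with unit trace), and $H(Y) = -\operatorname{tr}\rho_Y\log\rho_Y$.

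\textbf{Step 1 (orthogonal split of $Z$).} Using Proposition~\ref{thm:mean-routing-decomposition}, $AV = \mathbf 1 m + AV_c$, so $Z = u m^\top + Z_c$ with $u := \Omega_\pi^{1/2}\mathbf 1$. Since $\pi = \frac1n\mathbf 1$, we have $u = \frac{1}{\sqrt n}\mathbf 1$ and $\|u\|_2 = 1$. Moreover
\[
u^\top Z_c = \mathbf 1^\top \Omega_\pi A V_c = \pi^\top A V_c = \mu^\top V_c = 0,
\]
using $\pi^\top A = \mu^\top$ and \eqref{eq:value-decomposition}; this is the matrix form of the orthogonality $\langle \mathbf 1 m, AV_c\rangle_{\pi,F}=0$. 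Hence the cross terms vanish in
\[
Z^\top Z = m m^\top + Z_c^\top Z_c, \qquad \|Z\|_F^2 = \|m\|_2^2 + \|Z_c\|_F^2 .
\]
Dividing by $\|Z\|_F^2$ gives $\rho_Z = (1-\alpha)\rho_1 + \alpha\,\rho_{Z_c}$, where $\rho_1 = mm^\top/\|m\|_2^2$ is a rank-one projector with $H(\rho_1)=0$. The degenerate cases $m=0$ or $Z_c=0$ are trivial. Note that only the Gram matrix on the feature side decouples: $ZZ^\top$ is not block diagonal. So the spectrum of $Z$ is not simply the union of the two spectra, which is why a mixture argument is needed.

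\textbf{Step 2 (lower bound).} By concavity of $\rho\mapsto -\operatorname{tr}\rho\log\rho$,
\[
H(Z) \ge (1-\alpha)H(\rho_1) + \alpha H(\rho_{Z_c}) = \alpha H(Z_c).
\]

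\textbf{Step 3 (upper bound).} Apply the standard mixing inequality $H\big(\sum_i p_i\rho_i\big) \le H(p) + \sum_i p_i H(\rho_i)$ with $p = (1-\alpha,\alpha)$. This gives $H(Z) \le h_2(\alpha) + \alpha H(Z_c)$. This is the step I expect to need the most care.

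I would first try to cite the inequality; if a self-contained argument is wanted, I would proceed in two stages. First, use operator monotonicity of $\log$: from $p_i\rho_i \le \rho$ we get $\log(p_i\rho_i) \le \log\rho$ on the support of $\rho_i$. Then
\[
-\operatorname{tr}\rho\log\rho = -\sum_i p_i \operatorname{tr}\rho_i\log\rho \le -\sum_i p_i\operatorname{tr}\rho_i\log(p_i\rho_i) = H(p) + \sum_i p_i H(\rho_i).
\]
Alternatively, the Holevo bound on a classical–quantum state gives the same conclusion.

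\textbf{Step 4 (energy bound).} Since $\Omega_\pi^{1/2}$ only rescales, $\|Z_c\|_F^2 = \operatorname{tr}(V_c^\top A^\top\Omega_\pi A V_c) = \|AV_c\|_{\pi,F}^2$. The bound \eqref{eq:routed-contraction} of Proposition~\ref{thm:mean-routing-decomposition} then gives $\|Z_c\|_F \le \sqrt{\lambda_2(D_{\mu,\pi})}\,\|V_c\|_{\mu,F}$, which completes \eqref{eq:spectral-entropy-bound}.
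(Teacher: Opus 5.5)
Your proposal is correct and matches the paper's proof essentially step for step. The paper likewise kills the cross term via $\pi^\top A=\mu^\top$ and $\mu^\top V_c=0$, writes the normalized Gram matrix of $Z$ as the mixture $(1-\alpha)\rho_m+\alpha\rho_c$ with $\rho_m$ rank one, and then uses concavity of entropy for the lower bound, the two-component mixing bound for the upper bound, and Proposition~\ref{thm:mean-routing-decomposition} for the norm bound. Your one addition is a self-contained derivation of the mixing bound, which the paper only cites; it is sound if you regularize, using $p_i\rho_i+\varepsilon I\le\rho+\varepsilon I$ and letting $\varepsilon\to 0$, because $p_i\rho_i$ may be singular while operator monotonicity of $\log$ applies to positive definite arguments.
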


The spectrum of $D_{\mu,\pi}$ thus controls the dimensionality of the routed
component in two complementary senses: as a norm constraint and as the spectral entropy of $Z_c$ entering the
mixture bound on $H(Z)$. Proof in Supplementary materials~\ref{app:cor-spectral-entropy}.

\subsection{Spectral metrics}
\label{subsec:participation-ratio}
The TDO spectrum describes how centered token-dependent information survives an
attention step. We summarize it using the normalized participation ratio
\begin{equation}
    \mathrm{PR}(D_{\mu,\pi})
    := \frac{1}{n}
    \frac{\bigl(\sum_k \lambda_k\bigr)^2}{\sum_k \lambda_k^2}
    =
    \frac{1}{n}
    \frac{\operatorname{tr}(D_{\mu,\pi})^2}{\operatorname{tr}(D_{\mu,\pi}^2)}.
    \label{eq:pr-definition}
\end{equation}
It ranges from $1/n$ for a single active mode to
$\operatorname{rank}(D_{\mu,\pi})/n$, for a uniform nonzero spectrum, and is invariant to
global rescaling. In practice, we compute it from the symmetric matrix.$S_{\mu,\pi}=\Omega_\mu^{-1/2}A^\top \Omega_\pi A \Omega_\mu^{-1/2},$ which is symmetric and similar to $D_{\mu,\pi}$. Large $\mathrm{PR}(D_{\mu,\pi})$ indicates that centered variation is routed through several active directions; small values indicate compression of $V_c$ before the residual stream (Corollary~\ref{cor:spectral-entropy-routed}). This statistic is used in the uncertainty estimator of Section~\ref{sec:uq}. We also consider the centered normalized participation ratio $\mathrm{PR}_{\mathrm{cent}}(D_{\mu,\pi})
    =
    \frac{1}{n-1}
    \frac{\left(\sum_{k=2}^n \lambda_k\right)^2}
         {\sum_{k=2}^n \lambda_k^2}.
$
\section{Sinks, Routing, and Output Dimensionality}
\label{sec:empirical}

We test two predictions of the theory: that the intrinsic TDO reduces sink bias, and that its centered spectrum tracks output dimensionality. We compare intrinsic spectral diagnostics ($\mathrm{PR}(D_{\mu,\pi})$ and its centered restriction) against (i) \emph{raw attention statistics}, sink score, previous-token score~\cite{vazhentsev2025uncertaintyawareattentionheadsefficient}, attention entropy~\cite{zhai2023stabilizingtransformertrainingpreventing}; and (ii) \emph{counting-geometry spectral diagnostics},log-determinant score~\cite{Sriramanan2024_attention_score}, Laplacian diagonal entropy~\cite{binkowski-etal-2025-hallucination}, and $\mathrm{PR}(A^\top A)$. All metrics are evaluated against both sink score and the output spectral entropy $H(AV)$~\cite{barbero2025glasses} across the full head population.

\paragraph{Representative heads}
\label{subsec:toy}

\begin{figure}[!t]
    \centering
    \includegraphics[width=\linewidth]{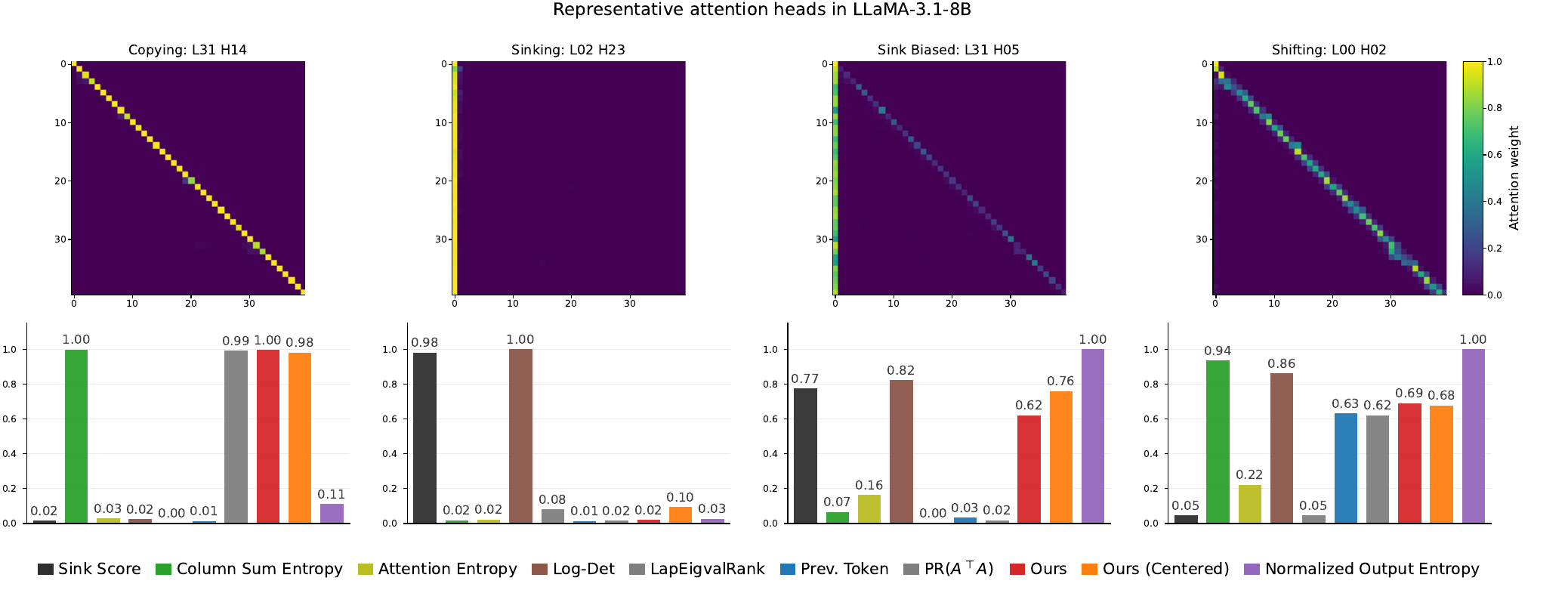}
    \caption{
    Representative attention heads and how different metrics classify their routing behavior. The intrinsic TDO score separates sink collapse from sink-biased routing, assigning high routing scores to copying, shifting, and sink-biased heads while distinguishing them from pure sink heads.}
    \label{fig:representative}
\end{figure}
Figure~\ref{fig:representative} compares four heads with clear routing properties that metrics should distinguish: a copying head, a sink head, a sink-biased head, and a shifting head. In the sink-biased head, substantial sink mass coexists with nontrivial routing, so counting-geometry and graph-based diagnostics confuse column-mass concentration with routing collapse, scoring it comparably to the sink head. The shifting head exposes the complementary failure: previous-token score~\cite{vazhentsev2025uncertaintyawareattentionheadsefficient} is tailored to this specific pattern and correctly flags it, but for the same reason misses the copying head, which routes information through a different position. In contrast, $\mathrm{PR}(D_{\mu,\pi})$ assigns high routing scores to both copying, shifting heads, and sink bias, while separating them from sink collapse, matching the routed output dimensionality across all four cases. We note that the copying head itself has low output dimensionality, highlighting that the output also depends on properties of $V$.

\paragraph{Sink dependence and output dimensionality}
\label{subsec:sink}
\begin{figure}[!t]
    \centering

    \includegraphics[width=\linewidth]{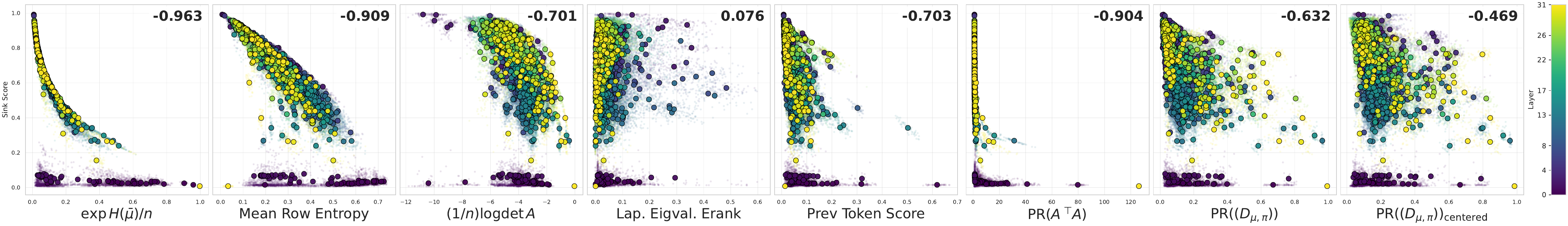}
    \includegraphics[width=\linewidth]{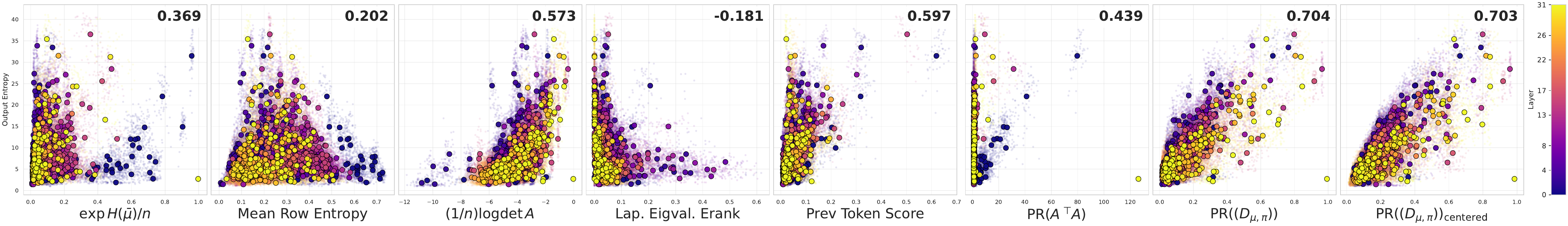}
    \caption{ Scatterplots showing the relationships between attention diagnostics, sink score, and output
    dimensionality. Top: relation between each diagnostic and sink score.
    Bottom: relation between each diagnostic and the spectral entropy of the
    head output $H(AV)$. Points correspond to the attention heads of
    LLaMA-3.1-8B on 200 inputs, shown both per input and averaged across inputs (bold points). We report the Spearman correlation between the metrics.}
    \label{fig:correlation-panels}
\end{figure}
We first refine the relation between sink concentration and routing collapse. Under causal row-stochasticity, rank deficiency of $A$ forces both phenomena jointly: a low-rank $A$ must concentrate column mass on early tokens, in the rank-one limit, $A = \mathbf{1} e_1^\top$, and the TDO spectrum loses centered modes accordingly. The converse fails: a head with imbalanced column sums can still maintain a rich centered spectrum, so sink behavior does not imply representational collapse. Disentangling the two regimes, therefore, requires a diagnostic that does not amplify column mass as a Euclidean direction. The intrinsic TDO satisfies this: its leading mode is fixed at the constant function and routing is measured on the centered subspace $\mathbf{1}^{\perp_\mu}$. Figure~\ref{fig:correlation-panels} confirms this empirically across heads. \emph{Top:} column-sum entropy, attention entropy, previous-token score, log-determinant score, and $\mathrm{PR}(A^\top A)$ all show visible dependence on sink score, with a correlation score between $0.5$ and $1$, while diagnostics derived from $D_{\mu,\pi}$ are substantially less sink-dependent; Laplacian diagonal entropy~\cite{binkowski-etal-2025-hallucination} shows the weakest sink dependence, plausibly because its degree normalization plays a role analogous to the pullback measure $\mu$. \emph{Bottom:} $\mathrm{PR}(D_{\mu,\pi})$, especially its centered variant, aligns more closely with $H(AV)$ than raw attention statistics or counting-geometry ranks, supporting Corollary~\ref{cor:spectral-entropy-routed}. The baselines fail in complementary ways: raw entropies are dominated by sink structure, while log-determinant and Laplacian-based scores identify collapsed outputs but miss the full range of output entropies. The intrinsic TDO spectrum thus best captures the non-collapsed, token-dependent routing signal needed for the uncertainty estimator developed next.
\section{Spectral Attention Uncertainty Quantification}
\label{sec:uq}

\begin{figure}[!t]
    \centering
    \includegraphics[width=\linewidth]{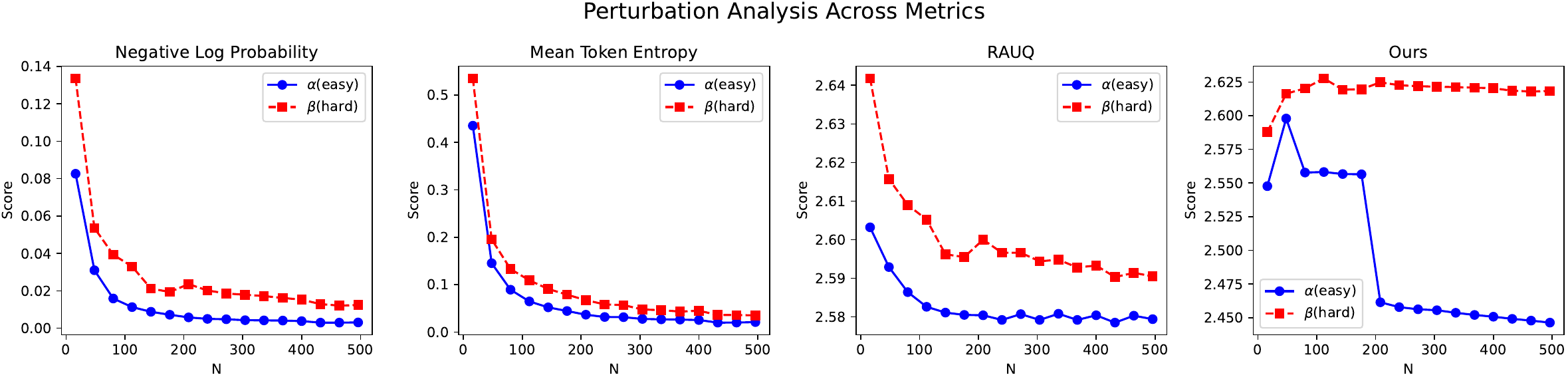}
    \caption{Perturbation sensitivity in a controlled copy task inspired
    by~\cite{velickovic2026perplexitytellrightwrong,pasten2025continuityisolationleaddoubts}.
    The model receives \texttt{copy:\,$b_1 b_2 \cdots b_n$} and must reproduce the bit
    string; we contrast the all-zero input $0^n$ ($\alpha$, easy) with a perturbation
    $b^{(r)}$ carrying a single one at position $r$ ($\beta$, hard). As $n$ grows,
    perplexity and mean token entropy become nearly indistinguishable across the two
    families, while RAUQ~\cite{vazhentsev2025uncertaintyawareattentionheadsefficient}
    and Ours, computed from hidden-layer routing geometry, remain sensitive to the
    perturbation.}
    \label{fig:perturbation}
\end{figure}
Probability-based uncertainty estimators, such as perplexity, sequence likelihood, and token entropy, are
computed at the unembedding output, after the residual stream has been mapped to logits.
When task-relevant distinctions are compressed during representation propagation, two inputs that differ in important ways induce nearly indistinguishable logits, and any output-probability estimator inherits this collapse~\cite{barbero2025glasses,pasten2025continuityisolationleaddoubts,velickovic2026perplexitytellrightwrong}, especially in long-range or non-local tasks. Figure~\ref{fig:perturbation} makes this concrete: on a controlled copy task, perplexity and token entropy become indistinguishable across two input families that still differ in a task-relevant position, while attention-based signals derived from hidden-layer routing geometry remain sensitive. 
The framework of Section~\ref{sec:theory} defines what the signal measures: the TDO spectrum quantifies the effective dimensionality of the centered, token-dependent component routed by an attention head. We hypothesize that spectral collapse during generation signals a loss of routing capacity that is not yet visible in output probabilities, and we test this hypothesis empirically. To this end, we define \emph{Spectral Attention Uncertainty Quantification} (SAUQ), a single-pass estimator that turns intrinsic attention spectra into a confidence score for selective prediction.

\subsection{Method}
\label{subsec:sauq-method}
Let $x=(x_1,\ldots,x_m)$ be the prompt and $y=(y_1,\ldots,y_T)$ the generated continuation. At the
decoding step $t$, the model attends over the prefix of length $n_t=m+t$, with attention map
$A_t^{(l,h)}\in\mathbb R^{n_t\times n_t}$ for layer $l$ and head $h$. We denote the generation-only
submatrix of $A_t^{(l,h)}$ as $B_t^{(l,h)} = A_t^{(l,h)}[G_t,G_t]$, with $G_t=[m+1,\ldots,m+t]$ the set
of generated tokens.
 
\paragraph{Head selection.}
We focus on the middle third of transformer layers, $\mathcal L_{\mathrm{mid}}$, as in
RAUQ~\cite{vazhentsev2025uncertaintyawareattentionheadsefficient} (layers 10-22 for LLaMA-3.1-8B).
Within each such layer, we first select a routing head using a lightweight spectral proxy computed
only on the generated-token attention block. Let $B_T^{(l,h)}$ be the generated-token square block
at the final decoding step. We select
\begin{equation}
    h^*(l)
    =
    \arg\max_h
    \mathrm{PR}\!\left(
        B_T^{(l,h)} B_T^{(l,h)\top}
    \right).
    \label{eq:sauq-selection}
\end{equation}
This proxy is a cheap empirical spectral screen: it scores all $H$ heads of a layer with a single batched
product on $T\times T$ blocks, whereas the intrinsic signal below is computed only for the selected
head.
 
\paragraph{Spectral signal.}
For each selected head and at each generation step, we equip the full prefix with the intrinsic
measures of \eqref{eq:mixed-measures}, $\pi_t=\frac{1}{n_t}\mathbf 1$ and
$\mu_t=A_t^{(l,h^*)\top}\pi_t$, and compute the normalized participation ratio of the TDO
\begin{equation}
    s_t^{(l)} := \mathrm{PR}\!\left(D^{(l,h^*)}_{\mu_t,\pi_t}\right).
    \label{eq:sauq-signal}
\end{equation} 
The signal only requires $\operatorname{tr}(S_{\mu_t,\pi_t})$ and $\|S_{\mu_t,\pi_t}\|_F^2$, so no eigendecomposition is needed.
 
\paragraph{Confidence recurrence.}
Then, we integrate our signal into the recurrence structure of RAUQ~\cite{vazhentsev2025uncertaintyawareattentionheadsefficient}.
Let $p_t = P(y_t \mid y_{<t},x)$ be the model probability assigned to the generated token at step $t$.
SAUQ defines a layer-wise confidence recursively by
\begin{equation}
    c_t^{(l)}
    =
    \begin{cases}
        p_1, & t = 1, \\
        \alpha\, p_t + (1-\alpha)\, s_t^{(l)} c_{t-1}^{(l)}, & t > 1,
    \end{cases}
    \label{eq:sauq-recurrence}
\end{equation}
where $\alpha\in[0,1]$ trades off local token probability against confidence propagated through the
spectral routing structure; in the following experiments, we select $\alpha=0.2$ following
\cite{vazhentsev2025uncertaintyawareattentionheadsefficient}. The recurrence, $\alpha$, and the layer range are taken unchanged from RAUQ: they are not dictated by our theory, and keeping them fixed isolates the effect of the routing signal.
 
Under this recurrence, high spectral participation preserves prior confidence, reflecting the presence of multiple active routing directions. By contrast, spectral collapse attenuates propagated confidence, since the head routes information through fewer effective degrees of freedom. For each selected layer, we convert the confidence trajectory into an uncertainty score by averaging log-confidence over the generated sequence,
$U^{(l)}_{\mathrm{SAUQ}} = 1 - \frac{1}{T}\sum_{t=1}^T \log c_t^{(l)}$.
The final sequence-level score is the maximum uncertainty over the middle-third layers,
$U_{\mathrm{SAUQ}} = \max_{l \in \mathcal L_{\mathrm{mid}}} U^{(l)}_{\mathrm{SAUQ}}$.
We use the maximum because SAUQ is a rejection score: a routing collapse in any monitored head
should raise uncertainty rather than be averaged away. Since our analysis is per head and precedes
residual mixing and MLP updates, such a collapse need not imply information loss for the whole
model. Thus, SAUQ preserves the recurrent structure of RAUQ, but replaces its scalar previous-token
score with the effective spectral dimension of routing heads from Section~\ref{sec:theory}.
 
\paragraph{Generation-only variant.}
The full $D_{\mu_t,\pi_t}$ mixes prompt and generated tokens. To isolate the uncertainty accumulated
during the generation, we also consider a variant in which the signal is the same Euclidean proxy
used for head selection, evaluated at each step on the generated block,
$s_t^{(l),\mathrm{gen}} := \mathrm{PR}\big(B_t^{(l,h^*)} B_t^{(l,h^*)\top}\big)$, with selection,
recurrence and aggregation unchanged. Since the prompt columns are removed and $B_t$ is not
renormalized, $B_t$ is sub-stochastic: the intrinsic geometry of Section~\ref{sec:theory} does not
apply. In \ref{app:selection-grid}, we analyze the effect of having different selection-signal relations in SAUQ.

\subsection{Evaluation}
\label{subsec:sauq-experiments}

\begin{table*}[t]
\centering
\small
\resizebox{\textwidth}{!}{%
\begin{tabular}{lc|cccc|c|cc|c}
\toprule
& & \multicolumn{5}{c|}{\textbf{QA}} & \multicolumn{3}{c}{\textbf{Summarization}} \\
\cmidrule(lr){3-7} \cmidrule(lr){8-10}
\textbf{Estimator} & Category &\textbf{TriviaQA} & \textbf{TruthfulQA} & \textbf{MMLU} & \textbf{COQA} & \textbf{Mean} & \textbf{SAMSUM} & \textbf{XSUM} & \textbf{Mean} \\
\midrule
AttentionScore \cite{Sriramanan2024_attention_score} & Single pass & 0.0796 & -0.0505 & 0.2078 & 0.0933 & 0.0826 & -0.3727 & -0.9983 & -0.6855 \\
MaximumSequenceProbability & Single pass &0.2518 & 0.3211 & 0.5501 & 0.2878 & 0.3527 & -0.1363 & -0.386 & -0.2612 \\
MeanTokenEntropy & Single pass &0.5661 & \textbf{0.4199} & 0.2437 & 0.2568 & 0.3716 & 0.4999 & 0.3928 & 0.4464 \\
Perplexity & Single Pass &0.1088 & 0.2863 & 0.5158 & 0.2737 & 0.2962 & 0.5071 & \textcolor{blue}{0.4053} & \textcolor{blue}{0.4562} \\
CCP\cite{fadeeva-etal-2023-lm} & Single pass &0.1029 & 0.159 & 0.5567 & 0.2602 & 0.2697 & -0.1458 & -0.3018 & -0.2238 \\
NLI-DegMat  \cite{lin2024generatingconfidenceuncertaintyquantification} & Sample-based & 0.5789 & 0.2061 & 0.4381 & \textcolor{blue}{0.3959} & 0.4048 & 0.3888 & 0.3236 & 0.3562 \\
NLI-Eccentricity  \cite{lin2024generatingconfidenceuncertaintyquantification} & Sample-based & 0.5641 & 0.2007 & 0.4469 & 0.3805 & 0.3981 & 0.344 & 0.0351 & 0.1896 \\
NLI-EigValLaplacian \cite{lin2024generatingconfidenceuncertaintyquantification} & Sample-based & \textbf{0.6034} & 0.1962 & 0.4681 & \textbf{0.4147} & \textbf{0.4206} & 0.3864 & 0.3845 & 0.3855 \\
EigenScore \cite{chen2024insidellmsinternalstates} & Sample-based & 0.3611 & 0.0524 & 0.4118 & 0.3467 & 0.293 & 0.0719 & -0.2122 & -0.0702 \\
LUQ \cite{zhang2024luqlongtextuncertaintyquantification} & Sample-based & \textcolor{blue}{0.5878} & 0.182 & 0.4392 & 0.3588 & 0.392 & 0.29 & 0.3585 & 0.3243 \\
LexicalSimilarity\_rougeL \cite{fomicheva-etal-2020-unsupervised} & Sample-based & 0.4788 & 0.1599 & 0.4127 & 0.3636 & 0.3538 & 0.2483 & 0.1981 & 0.2232 \\
SAR \cite{duan2024shiftingattentionrelevancepredictive}  & Sample-based & 0.5733 & 0.1786 & 0.4103 & 0.3275 & 0.3724 & 0.2936 & 0.3069 & 0.3003 \\
SemanticDensity\cite{qiu2024semanticdensityuncertaintyquantification} & Sample-based & 0.5592 & 0.0755 & 0.1582 & 0.1102 & 0.2258 & 0.0752 & 0.1138 & 0.0945 \\
SemanticEntropy\cite{kuhn2023semanticuncertaintylinguisticinvariances} & Sample-based & 0.2005 & 0.2113 & 0.4701 & 0.3662 & 0.312 & 0.2525 & -0.1886 & 0.032 \\
RAUQ \cite{vazhentsev2025uncertaintyawareattentionheadsefficient} & Single pass & 0.2631 & 0.3369 & \textbf{0.5596} & 0.311 & 0.3677 & \textcolor{blue}{0.5269} & 0.3008 & 0.4139 \\
LapEigValRAUQ \cite{binkowski-etal-2025-hallucination} & Single pass & 0.1190 & 0.3296 & 0.5578 & 0.3064  & 0.3282 & 0.5285  & 0.2907	 &  0.4096\\
LogDetRAUQ \cite{Sriramanan2024_attention_score} & Single pass & 0.1140 & 0.3313 & 0.5596	  & 0.3087	  &  0.328 & 0.5288  &  0.3048 &  0.4168 \\
\midrule
\textbf{SAUQ (gen. only)}& Single pass & 0.5013 & 0.3056 & 0.5375 & 0.311 & \textcolor{blue}{0.4139}  & 0.4948 & 0.2578 &  0.3763 \\

\textbf{SAUQ (Ours)}& Single pass & 0.2914 & \textcolor{blue}{0.3767} & \textcolor{blue}{0.5588} & 0.3055 & 0.3831 & \textbf{0.5924} & \textbf{0.4214} & \textbf{0.5069} \\
\bottomrule
\end{tabular}%
}
\caption{Comparison against unsupervised methods. The best result in each column is shown in bold, and the second best in blue.}
\label{tab:estimators_comparison}
\end{table*}

We evaluate SAUQ using Llama-3.1-8b\cite{llama3modelcard} and Qwen2.5-7B~\cite{yang2024qwen2technicalreport} (Appendix~\ref{app:qwen}). We use the LM Polygraph benchmark~\cite{fadeeva-etal-2023-lm} to ensure easy reproducibility, following the details provided in the experimental setting of \cite{vazhentsev2025uncertaintyawareattentionheadsefficient}. We report the Normalized Prediction Rejection Ratio (PRR; higher is better). We evaluate our method on question-answering task on TriviaQA~\cite{joshi2017triviaqalargescaledistantly}, TruthfulQA~\cite{lin2022truthfulqameasuringmodelsmimic}, MMLU~\cite{hendrycks2021measuringmassivemultitasklanguage}, CoQA~\cite{reddy2019coqaconversationalquestionanswering}, and on the summarization task on SAMSum~\cite{Gliwa_2019_samsum}, and XSum~\cite{narayan2018xsum}. Table~\ref{tab:estimators_comparison} compares SAUQ performances against single-step probability-based methods (Perplexity, MaxSequenceProbability, MeanTokenEntropy, CCP), sample-based (10 samples) methods (NLI-based spectral measures, EigenScore, LUQ, LexicalSimilarity, SAR, Semantic Density, SemanticEntropy), and attention-based baselines (AttentionScore, RAUQ, LapEigvalRAUQ, LogDetRAUQ). The last two are variants of SAUQ in which the attention signal has been extracted based on LapEigval\cite{binkowski-etal-2025-hallucination} and AttentionScore\cite{Sriramanan2024_attention_score}.

In Table~\ref{tab:estimators_comparison}, we report the mean PRR values for Llama-3.1-8b; We refer to the Appendix for additional results.
The table shows a regime-dependent picture rather than uniform dominance. On QA datasets, SAUQ is competitive with strong unsupervised estimators, but it is not the best method on average; several sample-based baselines obtain stronger mean QA performance. In contrast, the gains are clearest on summarization, where SAUQ achieves the best performance on both SAMSum and XSum, and the strongest summarization mean among all methods considered.
This is consistent with our motivation: when uncertainty depends on preserving distinctions across long contexts, the upstream routing spectrum carries information that probability-based estimators have already lost. We therefore view SAUQ as a complementary single-pass signal rather than a universal replacement. 

Because SAUQ, RAUQ, LapEigvalRAUQ, and LogDetRAUQ share the same recurrence and protocol, their comparison primarily reflects the effect of replacing the scalar attention signal used. The comparison among them isolates the effect of the routing metric inside the estimator. The ranking mirrors the metric-quality findings of Section~\ref{sec:empirical}: LapEigval and LogDet spectral signals carry less of the routing signal than the intrinsic TDO, and this gap translates into stronger single-pass uncertainty.

\paragraph{Choice of measure}
\label{subsec:sauq-ablation}

The framework treats the input measures $\mu,\pi$ as an explicit modeling axis rather than a fixed convention. In the context of uncertainty, we tested alternative approaches to uniform and intrinsic measures, including masses based on token positions, frequency of tokens (Zipf's Law), and unembedding matrix decomposition. Definitions of all measures and an example of their distribution are given in Appendix~\ref{app:measures}.
\begin{wraptable}{r}{0.34\linewidth}
\vspace{-1.2em}
\centering
\small
\setlength{\tabcolsep}{3.5pt}
\renewcommand{\arraystretch}{0.9}
\begin{tabular}{lcc}
\toprule
\textbf{Measure} & \textbf{QA} & \textbf{Summ.} \\
\midrule
Uniform     & 0.3660 & 0.4204 \\
Positional  & \textbf{0.4027} & \textcolor{blue}{0.4771} \\
Zipf-Law    & 0.3718 & 0.4223 \\
Unembedding & 0.3755 & 0.4202 \\
Intrinsic   & \textcolor{blue}{0.3831} & \textbf{0.5069} \\
\bottomrule
\end{tabular}
\caption{Mean PRR of SAUQ under different measures (LLaMA-3.1-8B).}
\end{wraptable}
Only the intrinsic choice $\mu=A^\top\pi$ satisfies Proposition~\ref{thm:sigma1}; for the others the constant function is not an eigenvector of $D_{\mu,\pi}$.
On short-range QA, several measures perform comparably, with positional weighting marginally best. On summarization, where uncertainty depends more strongly on long-range routing, the attention-induced measure outperforms the alternatives, while counting geometry lags. 
Interestingly, the performances of frequency- and unembedding-based estimators suggest that the 'semantic' of the tokens is less relevant in this scenario with respect to positional bias. Extending such analysis forward offers an intriguing future direction in which our approach can be exploited for a broader understanding of LLMs' computations.

\section{Conclusion}
\label{sec:conclusions}

We introduced a new spectral-geometric framework for attention. Euclidean spectral diagnostics can be biased by sink behavior, conflating column-mass concentration with genuine routing; the intrinsic probability geometry instead yields a token-difference spectrum that separates the trivial constant mode from the centered modes controlling routing capacity. Empirically, TDO operator better disentangle relations between sink score and head-output dimensionality, and supports uncertainty estimation. Building on it, SAUQ replaces scalar attention probes with a spectral measure of routing capacity and improves over closely related attention-based baselines on long-range tasks.
\paragraph{Limitations.}
Our analysis is per-head, per-input, and single-step, characterizing the routing map before residual mixing, MLP updates, and cross-head output projection; extending it to the full dynamics of representation evolution is a natural next step. Practically, computing the full spectrum is more expensive than scalar attention statistics or the participation-ratio statistic used in our variant, requiring additional care to apply at scale.
\paragraph{Broader Impact.}
More faithful diagnostics of attention routing may improve interpretability, uncertainty estimation, and reliability assessment in transformer models, especially for deployment of safe and reliable AI systems.

\begin{ack}
This work was supported by the ERC Consolidator Grant 101087347 (VEGA), the ANR AI Chair AIGRETTE, and by gifts from Ansys and Adobe Research. The MUR supported this work with the grant "Dipartimenti di Eccellenza 2023-2027" REGAINS of the Department of Informatics, Systems and Communication of the University of Milano-Bicocca. 
\end{ack}

\bibliographystyle{plain}
\bibliography{bib}
\newpage
\appendix
\textbf{Supplementary Material/appendix for "Spectral Geometry of Attention:
From Information Routing to Uncertainty".}

\section{Proofs for the Theoretical Section}
\subsection{Full Notation}


We consider a decoder-only transformer with $L$ layers and $H$ attention heads per layer, processing a sequence of $n$ tokens. At layer $l$, the input is $X_{l-1} \in \mathbb{R}^{n \times d}$, whose $i$-th row $x_i$ is the hidden representation of token $i$. We adopt the pre-norm convention as in~\cite{barbero2025glasses}:
\begin{equation}
\begin{aligned}
    \hat{X}_{l-1} = \mathrm{LN}(X_{l-1}),&
    M_l = X_{l-1} + \mathrm{MHA}_l(\hat{X}_{l-1}),& 
    X_l = M_l + \mathrm{MLP}_l(\mathrm{LN}(M_l)).
\end{aligned}
\label{eq:transformer-block}
\end{equation}
For each head $h$, queries, keys, and values are $Q_{l,h}=\hat{X}_{l-1}W^Q_{l,h}$, $ 
K_{l,h}=\hat{X}_{l-1}W^K_{l,h}$, $
V_{l,h}=\hat{X}_{l-1}W^V_{l,h}
$. The attention matrix is
\begin{equation}
    A_{l,h}
    =
    \mathrm{softmax}\!\left(Q_{l,h}K_{l,h}^\top/\sqrt{d} + C\right),
    \label{eq:attn2}
\end{equation}
where $C$ is the causal mask, with $C_{ij}=0$ for $j\le i$ and $C_{ij}=-\infty$ otherwise. The full multi-head attention output is obtained by concatenating the head and projecting outputs $
    \mathrm{MHA}_l(\hat{X}_{l-1})
    =
    \text{concat}_{h}\!\left(A_{l,h}V_{l,h}\right)W^O_l .$

\label{app:theory}

In this section we provide the technical details for the results stated in
Section~\ref{sec:theory}. 

\subsection{TDO is self-adjoint}
\label{app:weighted-adjoint-tdo}
We recall $
    \mathcal A:L^2(\mu)\longrightarrow L^2(\pi),
    \quad
    \mathcal A f = Af.
$. Its adjoint is defined as the operator $A^*$  such that $ \langle \mathcal A f,g\rangle_\pi
    =
    \langle f,\mathcal A^*g\rangle_\mu  $.
We compute it explicitly
\begin{equation}
    \langle \mathcal A f,g\rangle_\pi
    =
    (Af)^\top \Omega_\pi g                                           
    =
    f^\top A^\top \Omega_\pi g                                       
    =
    f^\top \Omega_\mu \bigl(\Omega_\mu^{-1}A^\top \Omega_\pi g\bigr)            
    =
    \langle f, \Omega_\mu^{-1}A^\top \Omega_\pi g\rangle_\mu .
\end{equation}
Hence, $    \mathcal A^*
    =
    \Omega_\mu^{-1}A^\top \Omega_\pi .
$
The token-difference operator $    D_{\mu,\pi}
    :=
    \mathcal A^*\mathcal A
    =
    \Omega_\mu^{-1}A^\top \Omega_\pi A .$ is self-adjoint in $L^2(\mu)$. Indeed, for any $f,g\in L^2(\mu)$,
\begin{equation}
    \langle D_{\mu,\pi}f,g\rangle_\mu
    =
    \langle \mathcal A^*\mathcal A f,g\rangle_\mu                 
    \langle \mathcal A f,\mathcal A g\rangle_\pi                  
    =
    \langle f,\mathcal A^*\mathcal A g\rangle_\mu                 
    =
    \langle f,D_{\mu,\pi}g\rangle_\mu .
\end{equation}
It is also positive semidefinite:
\begin{equation}
    \langle D_{\mu,\pi}f,f\rangle_\mu
    =
    \langle \mathcal A f,\mathcal A f\rangle_\pi
    =
    \|Af\|_\pi^2
    \ge 0 .
\end{equation}

The associated Rayleigh quotient is
\begin{equation}
    \mathcal R_{\mu,\pi}(f)
    =
    \frac{\|Af\|_\pi^2}{\|f\|_\mu^2}
    =
    \frac{f^\top A^\top \Omega_\pi A f}{f^\top \Omega_\mu f}
    =
    \frac{\langle D_{\mu,\pi}f,f\rangle_\mu}{\langle f,f\rangle_\mu}.
\end{equation}
Thus the eigenvalues of $D_{\mu,\pi}$ describe the squared gains of attention under the input geometry $\mu$ and the output geometry $\pi$.

Although $D_{\mu,\pi}$ is not generally symmetric in the Euclidean inner product,
it is similar to the symmetric positive semidefinite matrix
\begin{equation}
    S_{\mu,\pi}
    :=
    \Omega_\mu^{-1/2}A^\top \Omega_\pi A \Omega_\mu^{-1/2}.
\end{equation}
Indeed,
\begin{equation}
    D_{\mu,\pi}
    =
    \Omega_\mu^{-1/2}S_{\mu,\pi}\Omega_\mu^{1/2}.
\end{equation}
Therefore $D_{\mu,\pi}$ and $S_{\mu,\pi}$ have the same eigenvalues. In practice, we can compute spectral quantities using $S_{\mu,\pi}$.

\subsection{Euclidean case: proof of Proposition~\ref{prop:counting-noncontractive}}
\label{app:counting}

When both $\mu$ and $\pi$ are counting measures, $\Omega_\mu=\Omega_\pi=I$ and $    D_{\mu,\pi}
    =
    A^\top A .$
Thus, the eigenvalues of $D_{\mu,\pi}$ are the squared singular values of $A$.

\begin{proof}[Proof of Proposition~\ref{prop:counting-noncontractive}]
Because $A$ is row-stochastic, $A\mathbf 1=\mathbf 1$. Therefore
$    \|A\mathbf 1\|_2
    =
    \|\mathbf 1\|_2 .
$ By the definition of the spectral norm,
\begin{equation}
    \sigma_1(A)
    =
    \|A\|_2
    =
    \sup_{f\ne 0}\frac{\|Af\|_2}{\|f\|_2}
    \ge
    \frac{\|A\mathbf 1\|_2}{\|\mathbf 1\|_2}
    =
    1 .
\end{equation}

Now suppose that $\sigma_1(A)=1$. Then $\|A\|_2=1$, and the vector
$\mathbf 1$ attains the operator norm. Equivalently, the Rayleigh quotient of
$A^\top A$ at $\mathbf 1$ equals its largest eigenvalue:
\begin{equation}
    \frac{\mathbf 1^\top A^\top A\mathbf 1}
         {\mathbf 1^\top \mathbf 1}
    =
    1
    =
    \lambda_{\max}(A^\top A).
\end{equation}
Since $A^\top A$ is symmetric positive semidefinite, equality in the Rayleigh
principle implies that $\mathbf 1$ is an eigenvector of $A^\top A$ with
eigenvalue $1$:
$    A^\top A\mathbf 1
    =
    \mathbf 1 . $
Using $A\mathbf 1=\mathbf 1$, this becomes$A^\top \mathbf 1
    =
    \mathbf 1 .$ Hence every column of $A$ also sums to one, so $A$ is doubly stochastic.

Conversely, suppose that $A$ is doubly stochastic. Since $A$ is nonnegative and
has row sums and column sums equal to one,
\begin{equation}
    \|A\|_\infty = 1,
    \qquad
    \|A\|_1 = 1 .
\end{equation}
The standard norm interpolation bound gives
\begin{equation}
    \|A\|_2
    \le
    \sqrt{\|A\|_1\|A\|_\infty}
    =
    1 .
\end{equation}
Together with the lower bound $\|A\|_2\ge 1$, this gives
$\sigma_1(A)=\|A\|_2=1$.
\end{proof}

\paragraph{Frobenius budget.}
The total squared singular mass is
\begin{equation}
    \sum_k \sigma_k(A)^2
    =
    \|A\|_F^2
    =
    \sum_{i,j}A_{ij}^2 .
\end{equation}
Since each row of $A$ is a probability vector, $0\le A_{ij}\le 1$, and therefore
$A_{ij}^2\le A_{ij}$. Hence
\begin{equation}
    \|A\|_F^2
    =
    \sum_{i,j}A_{ij}^2
    \le
    \sum_{i,j}A_{ij}
    =
    n .
\end{equation}
Consequently,
\begin{equation}
    \sum_{k\ge 2}\sigma_k(A)^2
    =
    \|A\|_F^2-\sigma_1(A)^2
    \le
    n-\sigma_1(A)^2 .
\end{equation}
Thus, when the leading singular value is large, a large fraction of the available Frobenius budget is concentrated in the leading mode. 

There is also a lower bound. For any probability vector $p$ supported on $s$ coordinates, $    \sum_{j=1}^s p_j^2
    \ge
    \frac{1}{s},
$
by Cauchy's inequality. Without causal constraints, every row may use all $n$
positions, so $\|A\|_F^2\ge 1$, with equality at the fully uniform matrix.
Under causal masking, row $i$ is supported on at most $i$ positions. Therefore $    \sum_j A_{ij}^2
    \ge
    \frac{1}{i}, $ and hence
$    \|A\|_F^2
    \ge
    \sum_{i=1}^n \frac{1}{i}.
$

\subsection{Intrinsic probability geometry: proof of Proposition~\ref{thm:sigma1}}
\label{app:thm-sigma1}

We now use the intrinsic measures
\begin{equation}
    \pi=\frac1n\mathbf 1,
    \qquad
    \mu=A^\top \pi .
\end{equation}

\begin{proof}[Proof of Proposition~\ref{thm:sigma1}]
For each query position $i$, the row $A_{i:}$ is a probability vector. Therefore,
by Jensen's inequality applied to the convex function $x\mapsto x^2$,
\begin{equation}
    (Af)_i^2
    =
    \left(\sum_j A_{ij}f_j\right)^2
    \le
    \sum_j A_{ij}f_j^2 .
\end{equation}
Averaging this inequality over $i$ with the uniform measure $\pi_i=1/n$ gives
\begin{equation}
    \|Af\|_\pi^2
    =
    \frac1n\sum_{i=1}^n (Af)_i^2                                      
    \le
    \frac1n\sum_{i=1}^n\sum_{j=1}^n A_{ij}f_j^2                        
    =
    \sum_{j=1}^n
    \left(\frac1n\sum_{i=1}^n A_{ij}\right)f_j^2                       
    =
    \sum_{j=1}^n \mu_j f_j^2                                           =
    \|f\|_\mu^2 .
\end{equation}
Hence
$    \|\mathcal A\|_{L^2(\mu)\to L^2(\pi)}
    \le
    1 .
$ Since $A$ is row-stochastic $    A\mathbf 1=\mathbf 1. $
Because both $\mu$ and $\pi$ are probability measures their respective norm is 1. Therefore
$\|\mathcal A\mathbf 1\|_\pi
    =
    \|\mathbf 1\|_\pi
    =
    \|\mathbf 1\|_\mu .
$ So the operator norm is exactly one.

Finally, the eigenvalues of $D_{\mu,\pi}$ are squared singular gains of
$\mathcal A:L^2(\mu)\to L^2(\pi)$. The previous inequality shows that all eigenvalues are at most one, and the constant function attains gain one. Hence $\lambda_1(D_{\mu,\pi})=1,
    \qquad
    D_{\mu,\pi}\mathbf 1=\mathbf 1 .$ \end{proof}

The last identity can also be checked directly:
\begin{equation}
    D_{\mu,\pi}\mathbf 1
    =
    \Omega_\mu^{-1}A^\top \Omega_\pi A\mathbf 1       
    =
    \Omega_\mu^{-1}A^\top \Omega_\pi \mathbf 1           =
    \Omega_\mu^{-1}A^\top \pi                    
    =
    \Omega_\mu^{-1}\mu                           
    =
    \mathbf 1 .
\end{equation}
Moreover,
\begin{equation}
    \mu^\top D_{\mu,\pi}
    =
    \mu^\top \Omega_\mu^{-1}A^\top \Omega_\pi A        
    =
    \mathbf 1^\top A^\top \Omega_\pi A            
    =
    (A\mathbf 1)^\top \Omega_\pi A                
    =
    \mathbf 1^\top \Omega_\pi A                   
    =
    \pi^\top A                               
    =
    \mu^\top .
\end{equation}
Thus $\mathbf 1$ is a right eigenvector and $\mu$ is the corresponding invariant left measure.

\subsection{Proof of Corollary~\ref{cor:centered-contraction}}
\label{app:cor-centered}

Define the $\mu$-centered subspace
\begin{equation}
    \mathbf 1^{\perp_\mu}
    =
    \{f\in L^2(\mu):\mu^\top f=0\}.
\end{equation}
The identity $\pi^\top A=\mu^\top$ implies that attention maps centered functions
to centered functions. Indeed, if $f\in \mathbf 1^{\perp_\mu}$, then
\begin{equation}
    \pi^\top Af
    =
    \mu^\top f
    =
    0 .
\end{equation}
Thus $Af$ is $\pi$-centered.

Next, since $D_{\mu,\pi}$ is self-adjoint in $L^2(\mu)$ and
$D_{\mu,\pi}\mathbf 1=\mathbf 1$, the orthogonal complement
$\mathbf 1^{\perp_\mu}$ is invariant under $D_{\mu,\pi}$. To see this directly,
take any $f\in \mathbf 1^{\perp_\mu}$. Then
\begin{equation}
    \langle D_{\mu,\pi}f,\mathbf 1\rangle_\mu
    =
    \langle f,D_{\mu,\pi}\mathbf 1\rangle_\mu
    =
    \langle f,\mathbf 1\rangle_\mu
    =
    \mu^\top f
    =
    0 .
\end{equation}
Hence $D_{\mu,\pi}f\in \mathbf 1^{\perp_\mu}$.

For $f\in \mathbf 1^{\perp_\mu}$, the squared gain of attention is
\begin{equation}
    \frac{\|Af\|_\pi^2}{\|f\|_\mu^2}
    =
    \frac{\langle D_{\mu,\pi}f,f\rangle_\mu}{\langle f,f\rangle_\mu}.
\end{equation}
By the Rayleigh-Ritz variational principle applied to the restriction of
$D_{\mu,\pi}$ to $\mathbf 1^{\perp_\mu}$,
\begin{equation}
    \sup_{f\in \mathbf 1^{\perp_\mu}\setminus\{0\}}
    \frac{\|Af\|_\pi^2}{\|f\|_\mu^2}
    =
    \lambda_{\max}\!\left(D_{\mu,\pi}\big|_{\mathbf 1^{\perp_\mu}}\right).
\end{equation}
With the eigenvalues ordered as
\begin{equation}
    1=\lambda_1\ge \lambda_2\ge \cdots\ge 0,
\end{equation}
and with one copy of the eigenvalue $\lambda_1=1$ corresponding to the constant
function, this restricted maximum is $\lambda_2$. Therefore
\begin{equation}
    \sup_{f\in \mathbf 1^{\perp_\mu}\setminus\{0\}}
    \frac{\|Af\|_\pi}{\|f\|_\mu}
    =
    \sqrt{\lambda_2(D_{\mu,\pi})}
    \le
    1 .
\end{equation}
If the unit eigenvalue has multiplicity larger than one, then there exist
nonconstant centered modes with unit gain, and the same formula gives
$\lambda_2(D_{\mu,\pi})=1$.

\subsection{Proof of Proposition~\ref{thm:mean-routing-decomposition}}
\label{app:thm-mean-routing}

For matrix-valued token signals $U,W\in\mathbb R^{n\times d}$, define
\begin{equation}
    \langle U,W\rangle_{\mu,F}
    =
    \operatorname{tr}(U^\top \Omega_\mu W),
    \qquad
    \|U\|_{\mu,F}^2
    =
    \operatorname{tr}(U^\top \Omega_\mu U).
\end{equation}
The analogous $\pi$-weighted quantities are defined by replacing $\Omega_\mu$ with
$\Omega_\pi$.

Given a value matrix $V\in\mathbb R^{n\times d}$, define its $\mu$-mean and
centered component by
\begin{equation}
    m:=\mu^\top V\in\mathbb R^d,
    \qquad
    V_c:=V-\mathbf 1m .
\end{equation}
Then
\begin{equation}
    \mu^\top V_c
    =
    \mu^\top V-\mu^\top\mathbf 1m
    =
    m-(\mathbf 1^\top\mu)m
    =
    m-m
    =
    0,
\end{equation}
because $\mu$ is a probability measure. Thus every feature column of $V_c$ lies
in $\mathbf 1^{\perp_\mu}$.

Using $V=\mathbf 1m+V_c$ and $A\mathbf 1=\mathbf 1$, we obtain
\begin{equation}
    AV
    =
    A(\mathbf 1m+V_c)
    =
    \mathbf 1m+AV_c .
\end{equation}
This proves the additive decomposition.

We now prove the orthogonality of the two terms in $L^2(\pi)$. Since
$\pi^\top A=\mu^\top$ and $\mu^\top V_c=0$,
\begin{equation}
    \langle \mathbf 1m,AV_c\rangle_{\pi,F}
    =
    \operatorname{tr}\!\left((\mathbf 1m)^\top \Omega_\pi AV_c\right) 
    =
    \operatorname{tr}\!\left(m^\top \mathbf 1^\top \Omega_\pi AV_c\right) 
    =
    \operatorname{tr}\!\left(m^\top \pi^\top AV_c\right) 
    =
    \operatorname{tr}\!\left(m^\top \mu^\top V_c\right) 
    =
    0 .
\end{equation}

Finally, write $V_c=[v_c^{(1)},\ldots,v_c^{(d)}]$, where each
$v_c^{(r)}\in\mathbf 1^{\perp_\mu}$. Applying
Corollary~\ref{cor:centered-contraction} to each feature column gives
\begin{equation}
    \|A v_c^{(r)}\|_\pi^2
    \le
    \lambda_2(D_{\mu,\pi})\|v_c^{(r)}\|_\mu^2 .
\end{equation}
Summing over feature coordinates,
\begin{equation}
\begin{aligned}
    \|AV_c\|_{\pi,F}^2
    =
    \sum_{r=1}^d \|A v_c^{(r)}\|_\pi^2 
    \le
    \lambda_2(D_{\mu,\pi})
    \sum_{r=1}^d \|v_c^{(r)}\|_\mu^2 
    =
    \lambda_2(D_{\mu,\pi})\|V_c\|_{\mu,F}^2 .
\end{aligned}
\end{equation}
Taking square roots gives
\begin{equation}
    \|AV_c\|_{\pi,F}
    \le
    \sqrt{\lambda_2(D_{\mu,\pi})}\,\|V_c\|_{\mu,F}.
\end{equation}

\subsection{Proof of Corollary~\ref{cor:spectral-entropy-routed}}
\label{app:cor-spectral-entropy}

Let
\begin{equation}
    Z=\Omega_\pi^{1/2}AV,
    \qquad
    Z_c=\Omega_\pi^{1/2}AV_c .
\end{equation}
By Proposition~\ref{thm:mean-routing-decomposition},
\begin{equation}
    AV=\mathbf 1m+AV_c .
\end{equation}
Multiplying by $\Omega_\pi^{1/2}$ gives
\begin{equation}
    Z
    =
    \Omega_\pi^{1/2}\mathbf 1m+Z_c .
\end{equation}
Define
\begin{equation}
    Z_m:=\Omega_\pi^{1/2}\mathbf 1m .
\end{equation}
Since $\pi$ is a probability measure,
\begin{equation}
    Z_m^\top Z_m
    =
    m^\top \mathbf 1^\top \Omega_\pi \mathbf 1 m
    =
    m^\top m .
\end{equation}
Moreover, the cross term vanishes as a matrix:
\begin{equation}
    Z_m^\top Z_c
    =
    m^\top \mathbf 1^\top \Omega_\pi A V_c 
    =
    m^\top \pi^\top A V_c 
    =
    m^\top \mu^\top V_c 
    =
    0 .
\end{equation}
Therefore
\begin{equation}
    Z^\top Z
    =
    Z_m^\top Z_m+Z_c^\top Z_c
    =
    m^\top m+Z_c^\top Z_c ,
\end{equation}
and
\begin{equation}
    \|Z\|_F^2
    =
    \|m\|_2^2+\|Z_c\|_F^2 .
\end{equation}

Assume first that both $m\ne 0$ and $Z_c\ne 0$. Define the normalized spectral
densities
\begin{equation}
    \rho_Z
    =
    \frac{Z^\top Z}{\operatorname{tr}(Z^\top Z)},
    \qquad
    \rho_m
    =
    \frac{m^\top m}{\|m\|_2^2},
    \qquad
    \rho_c
    =
    \frac{Z_c^\top Z_c}{\|Z_c\|_F^2}.
\end{equation}
Let
\begin{equation}
    \alpha
    =
    \frac{\|Z_c\|_F^2}
         {\|m\|_2^2+\|Z_c\|_F^2}.
\end{equation}
Then
\begin{equation}
    \rho_Z
    =
    (1-\alpha)\rho_m+\alpha\rho_c .
\end{equation}
The matrix $\rho_m$ has rank one, so its entropy is zero:
\begin{equation}
    H(\rho_m)=0 .
\end{equation}
Using concavity of entropy,
\begin{equation}
    H(\rho_Z)
    \ge
    (1-\alpha)H(\rho_m)+\alpha H(\rho_c)
    =
    \alpha H(\rho_c).
\end{equation}
Since $H(\rho_Z)=H(Z)$ and $H(\rho_c)=H(Z_c)$, this gives
\begin{equation}
    H(Z)
    \ge
    \alpha H(Z_c).
\end{equation}

For the upper bound, we use the standard entropy bound for a two-component
mixture:
\begin{equation}
    H\bigl((1-\alpha)\rho_m+\alpha\rho_c\bigr)
    \le
    h_2(\alpha)
    +(1-\alpha)H(\rho_m)
    +\alpha H(\rho_c),
\end{equation}
where
\begin{equation}
    h_2(\alpha)
    =
    -\alpha\log\alpha-(1-\alpha)\log(1-\alpha).
\end{equation}
Since $H(\rho_m)=0$, this yields
\begin{equation}
    H(Z)
    \le
    h_2(\alpha)+\alpha H(Z_c).
\end{equation}
The cases $m=0$ or $Z_c=0$ follow by continuity, with the convention
$\alpha H(Z_c)=0$ when $Z_c=0$.

Finally,
\begin{equation}
    \|Z_c\|_F
    =
    \|\Omega_\pi^{1/2}AV_c\|_F
    =
    \|AV_c\|_{\pi,F}.
\end{equation}
The routed contraction bound from Proposition~\ref{thm:mean-routing-decomposition}
therefore gives
\begin{equation}
    \|Z_c\|_F
    \le
    \sqrt{\lambda_2(D_{\mu,\pi})}\,\|V_c\|_{\mu,F}.
\end{equation}

\subsection{Centered participation ratio}
\label{app:participation-ratio}
To isolate the nontrivial routed directions, we may also define the centered participation ratio
\begin{equation}
    \mathrm{PR}_{\mathrm{cent}}(D_{\mu,\pi})
    =
    \frac{1}{n-1}
    \frac{\left(\sum_{k=2}^n \lambda_k\right)^2}
         {\sum_{k=2}^n \lambda_k^2},
\end{equation}
whenever $\sum_{k=2}^n\lambda_k^2>0$. If all centered eigenvalues vanish, we set
$\mathrm{PR}_{\mathrm{cent}}(D_{\mu,\pi})=0$.

\section{Additional Results}

\subsection{Correlation on other models.}\begin{figure}[!ht]
    \centering

    \textbf{GPT-2}
    \includegraphics[width=\linewidth]{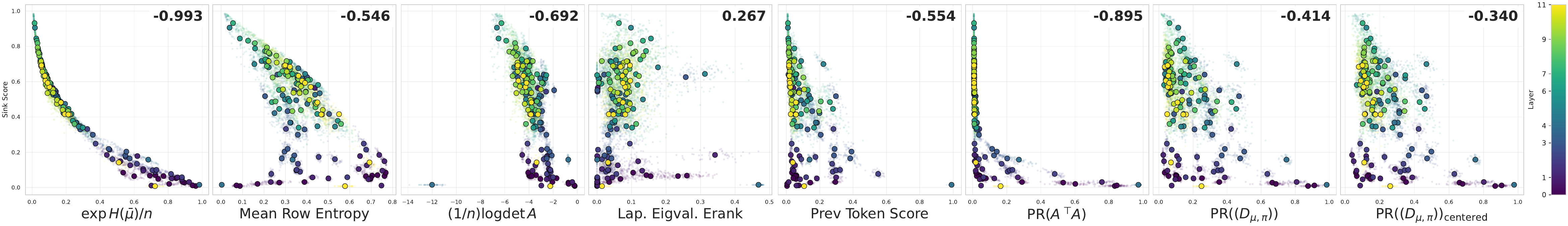}
    \includegraphics[width=\linewidth]{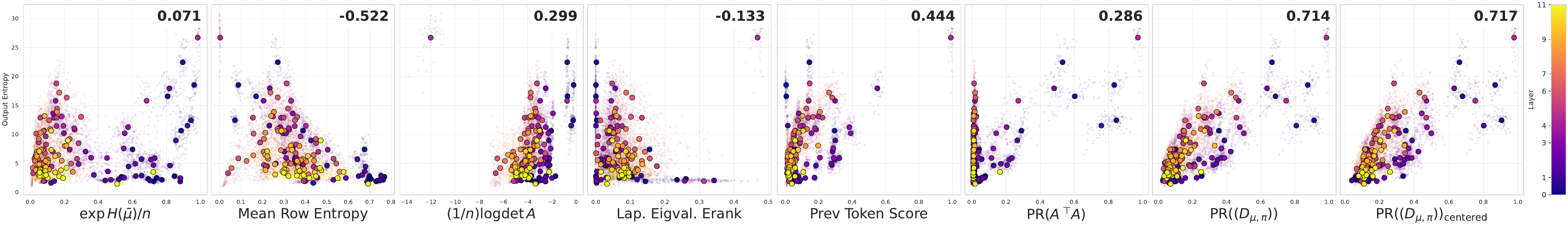}
    \textbf{Pythia-410m}
    \includegraphics[width=\linewidth]{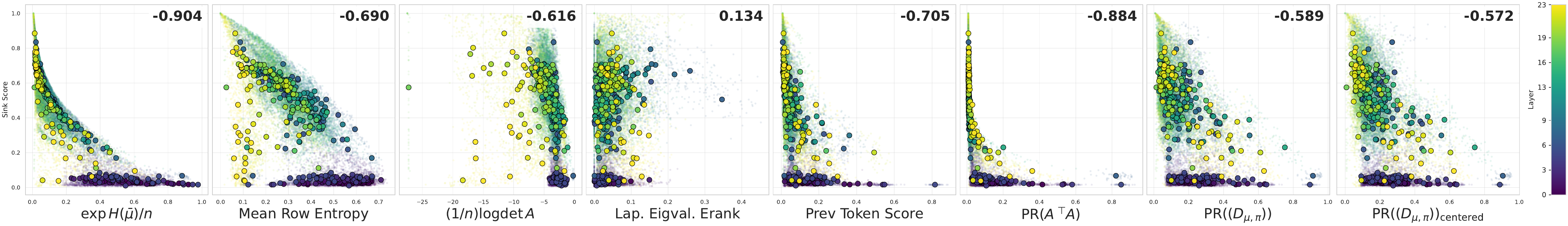}
    \includegraphics[width=\linewidth]{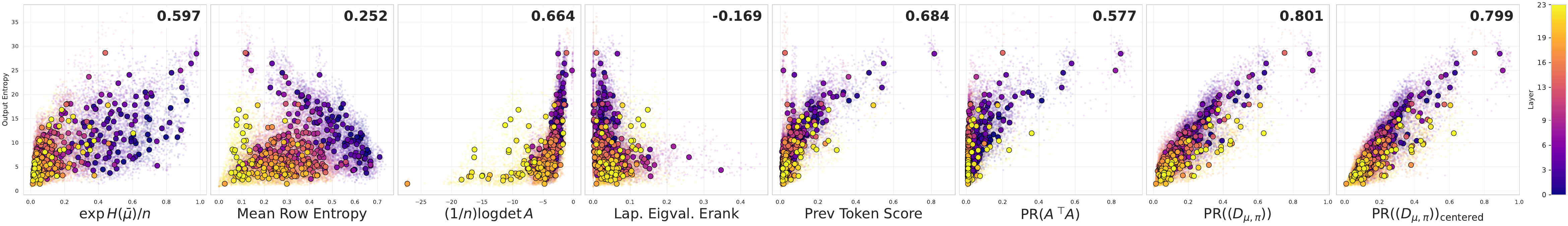}
    \textbf{Qwen2-7b}
    \includegraphics[width=\linewidth]{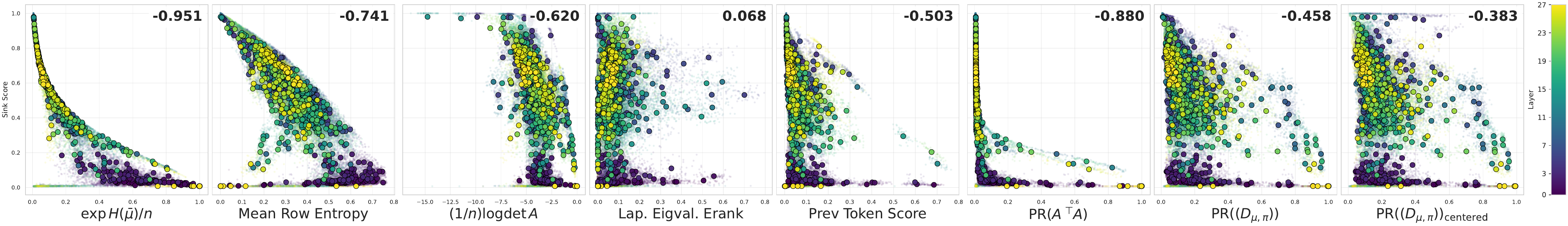}
    \includegraphics[width=\linewidth]{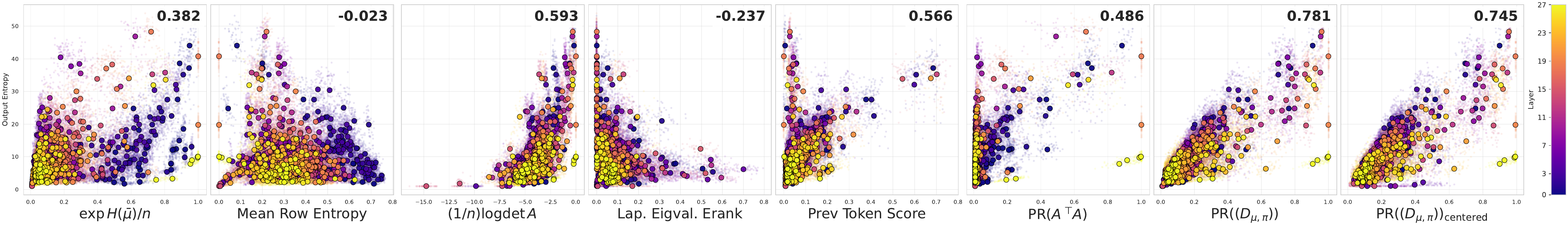}
    \caption{ Scatterplot between attention diagnostics, sink score, and output dimensionality on GPT-2, Pythia-410m and Qwen2-7b}
    \label{fig:correlation_new}
\end{figure}

We report the version of Figure \ref{fig:correlation-panels} but using three different models, GPT-2, Pythia-410m \cite{biderman2023pythiasuiteanalyzinglarge} and Qwen2-7b \cite{yang2024qwen2technicalreport}. The correlation plots are provided in Figure \ref{fig:correlation_new}. We note that the relation between the metrics seems independent of the model, being based on structural properties common to all attention maps.

\subsection{Supervised hallucination detection with spectral probes}\label{sec:supervised-hallucination}
 
Beyond unsupervised SAUQ, we probe the token-difference spectrum in the supervised
hallucination-detection protocol of LapEigval~\cite{binkowski-etal-2025-hallucination}, which trains a classifier on
eigenvalue features of the attention-induced graph Laplacian. We replace those features
with the leading $k=16$ eigenvalues of $D_{\mu,\pi}$ for each selected head. 
Truncation is necessary because, unlike the Laplacian, whose diagonal already supplies cheap
summaries, $D_{\mu,\pi}$ is dense and weighted, so its full spectrum is comparatively expensive
to estimate.

\begin{table}[!ht]
\centering
\caption{
Supervised hallucination-detection performance (ROC-AUC). We compare attention-matrix baselines, Laplacian eigenvalues, and features derived from the weighted token-difference operator. We follow the experimental protocol of~\cite{binkowski-etal-2025-hallucination} using Llama-3.1-8b \cite{llama3modelcard}, DeepSeek-4 as a judge.  
}
\label{tab:supervised-hallucination}
\begin{tabular}{lcccc}
\toprule
Dataset & AttnEigvals & AttnLogDet & LapEigvals  & TokenDiffEigvals \\
\midrule
COQA        & 0.7995 & 0.7779 & \textbf{0.8519} & \textcolor{blue}{0.8241} \\
GSM8K       & 0.8598 & 0.8586 & \textcolor{blue}{0.8660}  & \textbf{0.8663} \\
TriviaQA    & 0.8385 & 0.8346 & \textbf{0.8923}  & \textcolor{blue}{0.8731} \\
TruthfulQA  & 0.7997 & \textcolor{blue}{0.8362} & 0.8095  & \textbf{0.8445} \\
\bottomrule
\end{tabular}
\end{table}
 
Table~\ref{tab:supervised-hallucination} shows that TDO eigenvalues are competitive with
Laplacian features. 
In the low-$k$ regime, the leading eigenvalues of $D_{\mu,\pi}$ often match or exceed the
Laplacian features, suggesting that the dominant routing modes carry hallucination-relevant
information. We do not claim uniform dominance: with richer or cheaper Laplacian features,
the baseline can match or surpass the probe. Together with the unsupervised SAUQ results,
this suggests that the $D_{\mu,\pi}$ spectrum carries uncertainty information not reducible
to scalar attention statistics.

\subsection{Uncertainty quantification on Qwen2.5-7B}
\label{app:qwen}
 
Table~\ref{tab:estimators_comparison_2} replicates the evaluation of Section~\ref{sec:uq} on
Qwen2.5-7B~\cite{yang2024qwen2technicalreport}, using the same protocol, the same $\alpha=0.2$
and the middle third of layers.
 
The results confirm the regime dependence of Section~\ref{sec:uq}, but they also show that
the gains are model dependent. SAUQ improves over RAUQ on summarization
($0.3577$ vs.\ $0.3324$) and obtains the best MMLU score ($0.7204$), but it is below RAUQ on the
QA mean ($0.3724$ vs.\ $0.3944$). Unlike on LLaMA-3.1-8B, probability-based estimators
(MeanTokenEntropy, Perplexity) are the strongest on summarization, and the generated-only
variant outperforms the full-context one ($0.3654$ vs.\ $0.3577$). We interpret
the smaller advantage of the full-context signal as consistent with weaker sink behavior in
this model, since the intrinsic measure primarily corrects sink-induced bias; we list the
dependence on architecture as a limitation in Section~\ref{sec:conclusions}.
 
\begin{table*}[t]
\centering
\small
\resizebox{\textwidth}{!}{%
\begin{tabular}{lc|cccc|c|cc|c}
\toprule
 & & \multicolumn{5}{c|}{\textbf{QA}} & \multicolumn{3}{c}{\textbf{Summarization}} \\
\cmidrule(lr){3-7} \cmidrule(lr){8-10}
\textbf{Estimator} & Category & \textbf{TriviaQA} & \textbf{TruthfulQA} & \textbf{MMLU} & \textbf{CoQA} & \textbf{Mean} & \textbf{SAMSum} & \textbf{XSum} & \textbf{Mean} \\
\midrule
AttentionScore \cite{Sriramanan2024_attention_score} & Single pass & 0.0429 & 0.0466 & 0.1114 & 0.0022 & 0.0508 & 0.2924 & -0.0540 & 0.1192 \\
AttentionScore (gen.\ only) \cite{Sriramanan2024_attention_score} & Single pass & 0.1529 & -0.1053 & \textcolor{blue}{0.6741} & 0.0892 & 0.2027 & -0.0726 & -0.5038 & -0.2882 \\
MaximumSequenceProbability & Single pass & 0.2328 & 0.0436 & 0.6644 & 0.3016 & 0.3106 & 0.1220 & -0.0610 & 0.0305 \\
MeanTokenEntropy & Single pass & 0.6173 & 0.2474 & 0.2491 & 0.2533 & 0.3418 & \textcolor{blue}{0.4265} & 0.3363 & \textbf{0.3814} \\
Perplexity & Single pass & -0.0126 & 0.2093 & 0.4042 & 0.2620 & 0.2157 & \textbf{0.4303} & 0.3309 & \textcolor{blue}{0.3806} \\
CCP \cite{fadeeva-etal-2023-lm} & Single pass & 0.0636 & -0.0244 & 0.4957 & 0.2334 & 0.1921 & 0.0975 & 0.1419 & 0.1197 \\
NLI-DegMat \cite{lin2024generatingconfidenceuncertaintyquantification} & Sample-based & \textbf{0.6393} & 0.3117 & 0.3758 & \textbf{0.4073} & \textcolor{blue}{0.4335} & 0.2956 & 0.3367 & 0.3162 \\
NLI-Eccentricity \cite{lin2024generatingconfidenceuncertaintyquantification} & Sample-based & 0.6149 & 0.2472 & 0.3318 & 0.3476 & 0.3854 & 0.2883 & 0.2600 & 0.2742 \\
NLI-EigValLaplacian \cite{lin2024generatingconfidenceuncertaintyquantification} & Sample-based & 0.6201 & \textcolor{blue}{0.3147} & 0.3213 & 0.3855 & 0.4104 & 0.2934 & \textcolor{blue}{0.3535} & 0.3235 \\
EigenScore \cite{chen2024insidellmsinternalstates} & Sample-based & 0.4990 & -0.0506 & 0.3184 & 0.3282 & 0.2738 & 0.1783 & 0.0123 & 0.0953 \\
LUQ \cite{zhang2024luqlongtextuncertaintyquantification} & Sample-based & \textcolor{blue}{0.6334} & \textbf{0.3777} & 0.3576 & \textcolor{blue}{0.3900} & \textbf{0.4397} & 0.2697 & \textbf{0.4477} & 0.3587 \\
LexicalSimilarity\_rougeL \cite{fomicheva-etal-2020-unsupervised} & Sample-based & 0.6001 & 0.1317 & 0.3628 & 0.3697 & 0.3661 & 0.3237 & 0.2437 & 0.2837 \\
SAR \cite{duan2024shiftingattentionrelevancepredictive} & Sample-based & 0.6116 & 0.2120 & 0.4401 & 0.3446 & 0.4021 & 0.2869 & 0.3027 & 0.2948 \\
SemanticDensity \cite{qiu2024semanticdensityuncertaintyquantification} & Sample-based & 0.4903 & 0.1312 & 0.1323 & 0.0731 & 0.2067 & 0.1083 & 0.1898 & 0.1491 \\
SemanticEntropy \cite{kuhn2023semanticuncertaintylinguisticinvariances} & Sample-based & 0.3098 & 0.1179 & 0.3018 & 0.3066 & 0.2590 & 0.2640 & 0.0225 & 0.1433 \\
RAUQ \cite{vazhentsev2025uncertaintyawareattentionheadsefficient} & Single pass & 0.5226 & 0.1970 & 0.5939 & 0.2639 & 0.3944 & 0.3898 & 0.2750 & 0.3324 \\
\midrule
\textbf{SAUQ (gen.\ only)} & Single pass & 0.5906 & 0.1290 & 0.5182 & 0.2953 & 0.3833 & 0.3824 & 0.3484 & 0.3654 \\
\textbf{SAUQ (Ours)} & Single pass & 0.3780 & 0.1804 & \textbf{0.7204} & 0.2107 & 0.3724 & 0.3988 & 0.3165 & 0.3577 \\
\bottomrule
\end{tabular}%
}
\caption{Normalized PRR (higher is better) of unsupervised uncertainty estimators on Qwen2.5-7B.
Sample-based methods use 10 samples. The best result in each column is shown in bold, and the second best in blue.}
\label{tab:estimators_comparison_2}
\end{table*}
 
\subsection{Confidence intervals}
\label{app:ci}
 
We quantify the variability of PRR with a nonparametric bootstrap over evaluation instances: for each dataset, we resample instances with replacement 5000 times, recompute
PRR for every estimator on the same resample, and report the standard errors.
Differences between estimators are computed on paired resamples. Generation is greedy, so there
is no decoding randomness in single-pass estimators; sample-based baselines use the 10 samples
provided by LM-Polygraph. Table~\ref{tab:llama_with_ci} reports the intervals for the attention-based
estimators and the strongest baselines.

\begin{table*}[t]
\centering
\tiny
\setlength{\tabcolsep}{3pt}
\begin{tabular}{lc|cccc|cc}
\toprule
 & & \multicolumn{4}{c|}{\textbf{QA}} & \multicolumn{2}{c}{\textbf{Summarization}} \\
\cmidrule(lr){3-6} \cmidrule(lr){7-8}
\textbf{Estimator} & \textbf{Category} & \textbf{TriviaQA} & \textbf{TruthfulQA} & \textbf{MMLU} & \textbf{CoQA} & \textbf{SAMSum} & \textbf{XSum} \\
\midrule
MaximumSequenceProbability & Single pass & $0.25\pm{.02}$ & $0.32\pm{.06}$ & $\textbf{0.55}\pm{.02}$ & $0.29\pm{.02}$ & $-0.14\pm{.07}$ & $-0.39\pm{.11}$ \\
MeanTokenEntropy & Single pass & $\underline{0.57}\pm{.02}$ & $\textbf{\underline{0.42}}\pm{.05}$ & $0.24\pm{.03}$ & $0.26\pm{.02}$ & $0.50\pm{.05}$ & $\textbf{0.39}\pm{.08}$ \\
Perplexity & Single pass & $0.11\pm{.02}$ & $0.29\pm{.05}$ & $0.52\pm{.02}$ & $0.27\pm{.02}$ & $0.51\pm{.04}$ & $\textbf{0.41}\pm{.08}$ \\
CCP \cite{fadeeva-etal-2023-lm} & Single pass & $0.10\pm{.02}$ & $0.16\pm{.06}$ & $\textbf{\underline{0.56}}\pm{.02}$ & $0.26\pm{.02}$ & $-0.15\pm{.07}$ & $-0.30\pm{.11}$ \\
\midrule
NLI-DegMat \cite{lin2024generatingconfidenceuncertaintyquantification} & Sample-based & $\textbf{0.58}\pm{.02}$ & $0.21\pm{.06}$ & $0.44\pm{.02}$ & $\textbf{0.40}\pm{.02}$ & $0.39\pm{.05}$ & $\textbf{0.32}\pm{.09}$ \\
NLI-Eccentricity \cite{lin2024generatingconfidenceuncertaintyquantification} & Sample-based & $0.56\pm{.02}$ & $0.20\pm{.06}$ & $0.45\pm{.02}$ & $0.38\pm{.02}$ & $0.34\pm{.05}$ & $0.04\pm{.10}$ \\
NLI-EigValLaplacian \cite{lin2024generatingconfidenceuncertaintyquantification} & Sample-based & $\textbf{0.60}\pm{.02}$ & $0.20\pm{.06}$ & $0.47\pm{.02}$ & $\textbf{0.41}\pm{.02}$ & $0.39\pm{.05}$ & $\textbf{0.38}\pm{.09}$ \\
EigenScore \cite{chen2024insidellmsinternalstates} & Sample-based & $0.36\pm{.02}$ & $0.05\pm{.06}$ & $0.41\pm{.03}$ & $0.35\pm{.02}$ & $0.07\pm{.06}$ & $-0.21\pm{.10}$ \\
LUQ \cite{zhang2024luqlongtextuncertaintyquantification} & Sample-based & $\textbf{0.59}\pm{.02}$ & $0.18\pm{.06}$ & $0.44\pm{.02}$ & $0.36\pm{.02}$ & $0.29\pm{.05}$ & $\textbf{0.36}\pm{.08}$ \\
LexicalSimilarity (ROUGE-L) \cite{fomicheva-etal-2020-unsupervised} & Sample-based & $0.48\pm{.02}$ & $0.16\pm{.06}$ & $0.41\pm{.03}$ & $0.36\pm{.02}$ & $0.25\pm{.06}$ & $0.20\pm{.09}$ \\
SAR \cite{duan2024shiftingattentionrelevancepredictive} & Sample-based & $0.57\pm{.02}$ & $0.18\pm{.06}$ & $0.41\pm{.03}$ & $0.33\pm{.02}$ & $0.29\pm{.06}$ & $\textbf{0.31}\pm{.09}$ \\
SemanticDensity \cite{qiu2024semanticdensityuncertaintyquantification} & Sample-based & $0.56\pm{.02}$ & $0.08\pm{.06}$ & $0.16\pm{.03}$ & $0.11\pm{.02}$ & $0.08\pm{.06}$ & $0.11\pm{.10}$ \\
SemanticEntropy \cite{kuhn2023semanticuncertaintylinguisticinvariances} & Sample-based & $0.20\pm{.03}$ & $0.21\pm{.06}$ & $0.47\pm{.02}$ & $0.37\pm{.02}$ & $0.25\pm{.06}$ & $-0.19\pm{.11}$ \\
\midrule
AttentionScore \cite{Sriramanan2024_attention_score} & Single pass & $0.08\pm{.02}$ & $-0.05\pm{.06}$ & $0.21\pm{.03}$ & $0.09\pm{.02}$ & $-0.37\pm{.07}$ & $-1.00\pm{.11}$ \\
RAUQ \cite{vazhentsev2025uncertaintyawareattentionheadsefficient} & Single pass & $0.26\pm{.02}$ & $0.34\pm{.05}$ & $\textbf{\underline{0.56}}\pm{.02}$ & $0.31\pm{.02}$ & $0.52\pm{.05}$ & $\textbf{0.30}\pm{.08}$ \\
LapEigvalRAUQ \cite{binkowski-etal-2025-hallucination} & Single pass & $0.12\pm{.02}$ & $0.33\pm{.06}$ & $\textbf{\underline{0.56}}\pm{.02}$ & $\underline{0.31}\pm{.02}$ & $0.53\pm{.04}$ & $0.29\pm{.08}$ \\
LogDetRAUQ \cite{Sriramanan2024_attention_score} & Single pass & $0.11\pm{.02}$ & $0.33\pm{.05}$ & $\textbf{\underline{0.56}}\pm{.02}$ & $\underline{0.31}\pm{.02}$ & $0.53\pm{.04}$ & $0.30\pm{.08}$ \\
\midrule
\textbf{SAUQ (gen.\ only)} & Single pass & $0.50\pm{.02}$ & $0.30\pm{.05}$ & $\textbf{0.54}\pm{.02}$ & $0.31\pm{.02}$ & $0.41\pm{.05}$ & $0.25\pm{.09}$ \\
\textbf{SAUQ (Ours)} & Single pass & $ 0.29\pm{.02}$ & $\textbf{0.38}\pm{.05}$ & $\textbf{\underline{0.56}}\pm{.02}$ & $\underline{0.31}\pm{.02}$ & $\textbf{\underline{0.59}}\pm{.04}$ & $\textbf{\underline{0.42}}\pm{.07}$ \\
\bottomrule
\end{tabular}
\caption{Normalized PRR (higher is better) of unsupervised uncertainty estimators on LLaMA-3.1-8B, mean $\pm$ standard error
(stratified bootstrap over evaluation instances, 5000 resamples; one greedy decode per instance, 10 samples for sample-based methods).
Bold: not significantly worse than the best result in the column.}
\label{tab:llama_with_ci}
\end{table*}

\subsection{Calibration}
\label{app:ece}
 
PRR measures the quality of the ranking induced by an uncertainty score, not its calibration; for this reason, we do not expect it to be calibrated. For completeness,
Table~\ref{tab:ece} reports the expected calibration error of single-pass estimators.
 
\begin{table}[!ht]
\centering
\small
\begin{tabular}{lcc}
\toprule
\textbf{Estimator} & \textbf{QA (mean ECE)} & \textbf{Summ.\ (mean ECE)} \\
\midrule
Perplexity         & 0.237 & 0.558 \\
AttentionScore     & 0.266 & 0.551 \\
MaxSeqProbability  & 0.265 & 0.685 \\
MeanTokenEntropy   & \textbf{0.162} & 0.505 \\
RAUQ               & 0.187 & 0.522 \\
SAUQ (Ours)        & 0.194 & \textbf{0.490} \\
\bottomrule
\end{tabular}
\caption{Expected calibration error (lower is better) of single-pass estimators on LLaMA-3.1-8B,
averaged over the QA and summarization datasets.}
\label{tab:ece}
\end{table}

\subsection{Head selection and routing signal}
\label{app:selection-grid}
 
SAUQ enters the estimator at two places: the rule that selects one head per layer, and the
signal computed on that head inside the recurrence. To separate the two, we vary them
independently on TruthfulQA and SAMSum (Table~\ref{tab:selection-grid}). Rows are selection rules,
columns are signals; (Gen) denotes the generated-token block, (Full) the full prefix, and
(Eucl) and (Intrinsic) the counting and intrinsic geometries. Our configuration is
generated-Euclidean selection with the full-intrinsic signal.
 
\begin{table}[h]
\centering
\small
\setlength{\tabcolsep}{4pt}
\begin{tabular}{l|cccc}
\toprule
\multicolumn{5}{c}{\textbf{TruthfulQA}} \\
\midrule
Selection $\downarrow$ / Signal $\rightarrow$ & gen-Eucl & full-Eucl & full-intr. & full-intr.-cent. \\
\midrule
gen-Euclidean      & 0.3056 & 0.3494 & \underline{0.3767} & \textbf{0.3876} \\
full-Euclidean     & 0.3203 & 0.3507 & 0.3761 & 0.3759 \\
full-intrinsic     & 0.3078 & 0.3508 & 0.3482 & 0.3429 \\
full-intr.-cent.   & 0.3093 & 0.3508 & 0.3482 & 0.3429 \\
\midrule
\multicolumn{5}{c}{\textbf{SAMSum}} \\
\midrule
gen-Euclidean      & 0.4948 & 0.5384 & \underline{0.5924} & \textbf{0.6028} \\
full-Euclidean     & 0.4398 & 0.5375 & 0.5358 & 0.5371 \\
full-intrinsic     & 0.4916 & 0.5371 & 0.5144 & 0.5339 \\
full-intr.-cent.   & 0.5021 & 0.5373 & 0.5054 & 0.5335 \\
\bottomrule
\end{tabular}
\caption{PRR for every combination of head-selection rule (rows) and routing signal (columns).
Underlined: the configuration reported in the submitted version; bold: best per dataset.}
\label{tab:selection-grid}
\end{table}
 
Three observations follow. First, the intrinsic geometry helps on the signal axis: with the
selection rule fixed, replacing the full-Euclidean signal with the full-intrinsic one improves PRR. Second, the centered PR is marginally better than
the full PR on both datasets, consistent with Section~\ref{sec:empirical}. Third, the intrinsic geometry does not help on the
selection axis.

\paragraph{Cost of intrinsic selection.}
Selection screens all $H$ heads in every monitored layer, whereas the signal is computed only for
the selected head. Intrinsic selection would require the TDO of the full $(m+T)\times(m+T)$ matrix
for every head, i.e.\ $H$ times the cost of the signal, while the Euclidean proxy only uses the
$T\times T$ generated block.

\subsection{Length-controlled analysis}
\label{app:length}
 
Summarization and QA differ in task, input length, and output length, so the summarization gains
of Section~\ref{sec:uq} could reflect any of these. We stratify the difference
$\Delta = \mathrm{PRR}_{\mathrm{SAUQ}} - \mathrm{PRR}_{\mathrm{RAUQ}}$ by input and output length on
LLaMA-3.1-8B, using a stratified bootstrap with 5{,}000 resamples. Table~\ref{tab:length} reports the
pooled difference and the difference after controlling for output length, and for input and output
length jointly.
 
\begin{table}[h]
\centering
\small
\setlength{\tabcolsep}{4pt}
\begin{tabular}{llcccc}
\toprule
\textbf{Dataset} & \textbf{Task} & $\boldsymbol\Delta$ \textbf{pooled} & \textbf{95\% CI} & \textbf{ctrl.\ output} & \textbf{ctrl.\ input$\times$output} \\
\midrule
XSum       & Summ. & $+0.121$ & $[+0.037, +0.212]$ & $+0.078$ & $+0.060$ \\
SAMSum     & Summ. & $+0.066$ & $[+0.010, +0.121]$ & $+0.013$ & $+0.020$ \\
TruthfulQA & QA    & $+0.040$ & $[-0.004, +0.084]$ & $+0.023$ & $+0.014$ \\
TriviaQA   & QA    & $+0.028$ & $[-0.001, +0.058]$ & $+0.047$ & $+0.056$ \\
CoQA       & QA    & $-0.006$ & $[-0.015, +0.004]$ & $-0.013$ & $-0.012$ \\
MMLU       & QA    & $-0.001$ & $[-0.005, +0.004]$ & $-0.016$ & $-0.022$ \\
\bottomrule
\end{tabular}
\caption{Difference in PRR between SAUQ and RAUQ on LLaMA-3.1-8B, pooled and after stratifying by
output length and by input $\times$ output length.}
\label{tab:length}
\end{table}
 
Roughly half of the summarization gain is explained by generation length: the summarization mean
of $\Delta$ drops from $0.093$ to $0.046$ under the output-length control, while the input-length
control leaves it unchanged ($0.093$). The residual is carried by XSum, where it survives both
controls and grows with context: across input-length quartiles it is $+0.025$, $+0.053$, $+0.152$
and $+0.170$. On SAMSum, whose inputs span a narrower range of lengths, the advantage does not
survive the output-length control ($+0.066 \to +0.013$). We therefore attribute to long-range
routing only the residual XSum effect.

\subsection{Choice of measure}
\label{app:measures}
\begin{figure*}[t]
\centering
\footnotesize

\setlength{\tabcolsep}{3pt}

\begin{tabular}{ccc}

\begin{minipage}[t]{0.31\textwidth}
    \centering
    \textbf{A. Uniform}\\
    \includegraphics[width=\linewidth]{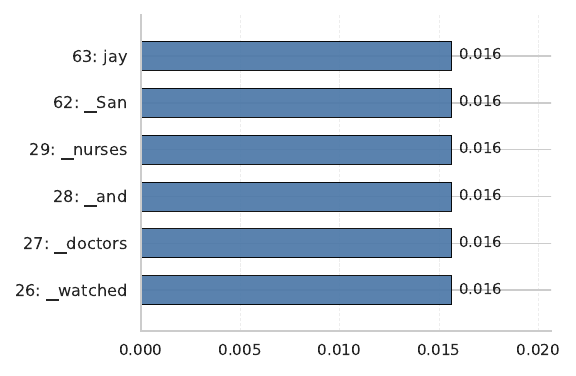}
\end{minipage}
&
\begin{minipage}[t]{0.31\textwidth}
    \centering
    \textbf{B. Positional}\\
    \includegraphics[width=\linewidth]{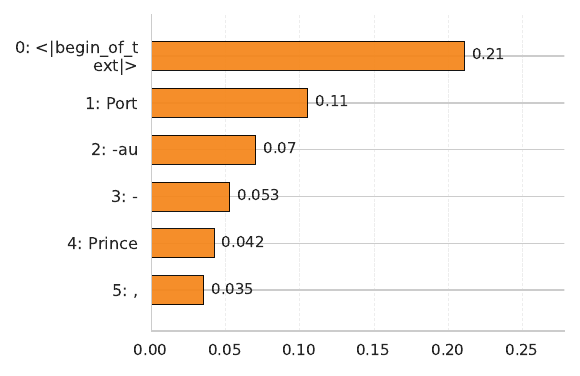}
\end{minipage}
&
\begin{minipage}[t]{0.31\textwidth}
    \centering
    \textbf{C. Zipf-Law}\\
    \includegraphics[width=\linewidth]{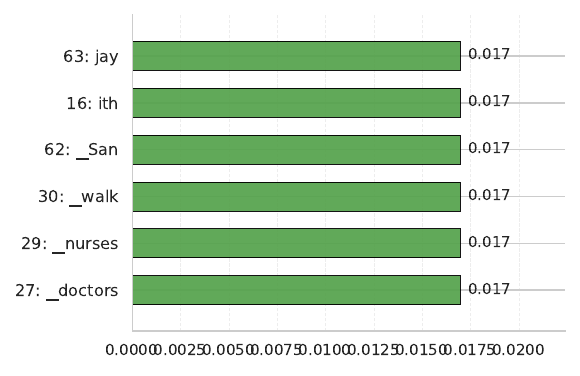}
\end{minipage}
\\[0.8em]

\begin{minipage}[t]{0.31\textwidth}
    \centering
    \textbf{D. Unembedding}\\
    \includegraphics[width=\linewidth]{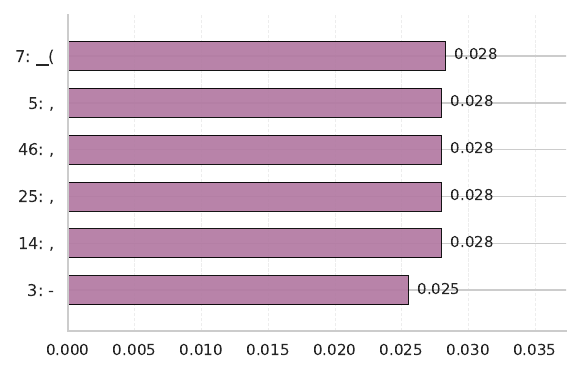}
\end{minipage}
&
\begin{minipage}[t]{0.31\textwidth}
    \centering
    \textbf{E. Intrinsic}\\
    \includegraphics[width=\linewidth]{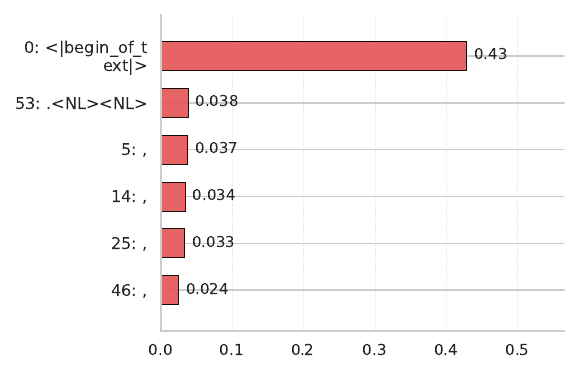}
\end{minipage}
&
\begin{minipage}[t]{0.31\textwidth}
    \centering
    \textbf{}\\[0.45em]

    \begin{tabular}{lcc}
    \toprule
    \textbf{Measure} & \textbf{QA} & \textbf{Summ.} \\
    \midrule
    Uniform    & 0.3660 & 0.4204 \\
    Positional  & \textbf{0.4027} & \textcolor{blue}{0.4771} \\
    Zipf-Law    & 0.3718 & 0.4223 \\
    Unembedding     & 0.3755 & 0.4202 \\
    Intrinsic   & \textcolor{blue}{0.3831} & \textbf{0.5069} \\
    \bottomrule
    \end{tabular}
\end{minipage}

\end{tabular}

\caption{
Top tokens receiving the largest probability mass under each measure, and mean PRR for QA and summarization of SAUQ with these masses. Input (first 100 chars):  \textit{Port-au-Prince, Haiti (CNN) -- Earthquake victims, writhing in pain and grasping at life, watched doctors and nurses...}.
}
\label{fig:sauq-masses}

\end{figure*}

All measures below are normalized to sum to one over the $n$ tokens of the prefix. In all cases
$\pi=\frac1n\mathbf 1$.
In Figure \ref{fig:sauq-masses},  we show an example on how the mass are distributed.
 
\begin{itemize}
    \item \textbf{Uniform:} $\mu_j = 1/n$.
    \item \textbf{Positional:} $\nu_l = l^{-\gamma}$ with $\gamma=1$. Early positions, including the
    sink, receive large mass; since keys enter $G_\nu$ through $\Omega_\nu^{-1}$, their columns are
    down-weighted.
    \item \textbf{Zipf-Law (in-context frequency):} $\nu_l = \big|\{k\le N : x_k = x_l\}\big|$, the number of
    occurrences of token $x_l$ in the sequence. Tokens that occur once receive the same mass, so on
    short inputs this measure is close to uniform.
    \item \textbf{Unembedding:} $\nu_l = w_{x_l}$ with $w_t = \exp\!\big(\sigma p_t - \max_{t'}\sigma p_{t'}\big)$, where
    $p_t = (W_U[t]-\bar w)^\top v_1$, $\bar w$ is the mean row of $W_U$, $v_1$ the top right singular
    vector of the centered $W_U$, and $\sigma\in\{\pm1\}$ is chosen so that $\sigma p_t$ is
    negatively correlated with the token id.
    \item \textbf{Intrinsic:} $\mu = A^\top\pi$, the average attention row of the head.
\end{itemize}

\subsection{Resource details.}
All experiments were run on a single NVIDIA L40S GPU with 48\,GB of memory, sufficient to hold
8-billion-parameter models. Running the full SAUQ evaluation on the
unsupervised benchmark takes approximately 8 hours per model. The routing signal
requires only $\operatorname{tr}(S)$ and $\operatorname{tr}(S^2)=\|S\|_F^2$ of the symmetric matrix
$S_{\mu,\pi}$, so no eigendecomposition is needed; forming $S_{\mu,\pi}$ costs
$\mathcal O(n_t^3)$ per selected head and step with dense products.

\newpage
\end{document}